\newtheorem{theorem}{Theorem}[section]
\newtheorem{lemma}{Lemma}[section]
\newtheorem{remark}{Remark}[section]
\newtheorem{definition}{Definition}[section]
\newlist{compactenum}{enumerate}{4}
\setlist[compactenum,1]{nolistsep}
\setlist[compactenum,1]{nolistsep,label=(\alph*)}
\DeclarePairedDelimiter{\abs}{\lvert}{\rvert}
\DeclarePairedDelimiter\norm{\lVert}{\rVert}%
\newcommand{\RvIcra}[1]{{\color{black}{{#1}}}}
\newcommand{\RvTypoIcra}[1]{{\color{black}{{#1}}}}
\title{\LARGE \bf
A Complete Set of Connectivity-aware Local Topology Manipulation Operations for Robot Swarms
}
\author{Koresh Khateri*$^{1}$, Karthik Soma*, Mahdi Pourgholi$^{2}$, Mohsen Montazeri, \\ Lorenzo Sabattini$^{3}$ and Giovanni Beltrame$^{1}$%
\thanks{$^{1}$ \'Ecole Polytechnique de
Montr\'eal, 2900 Boul \'Edouard-Montpetit, Qu\'ebec, CA
{(email: \tt\small\{koresh.khateri, karthik.soma, giovanni.beltrame\}@polymtl.ca})}%
\thanks{$^{2}$ Shahid Beheshti University, Tehran, Iran}%
\thanks{$^{3}$ University of Modena and Reggio Emilia, Reggio Emilia, Italy}%
}
\begin{document}

\maketitle
\def\thefootnote{*}\footnotetext{These authors contributed equally to this work} 
\thispagestyle{empty}
\pagestyle{empty}


\begin{abstract}
  The topology of a robotic swarm affects the convergence speed of
  consensus and the mobility of the robots.
  In this paper, we prove the existence of a complete set of local topology
  manipulation operations that allow the transformation of a swarm topology. The
  set is complete in the sense that any other possible set of manipulation
  operations can be performed by a sequence of operations from our set. The
  operations are local as they depend only on the first and second hop
  neighbors' information to transform any initial spanning tree of the network's
  graph to any other connected tree with the same number of nodes. The
  flexibility provided by our method is similar to global methods that require
  full knowledge of the swarm network. We prove the existence of a sequence of
  transformations for any tree-to-tree transformation, and derive sequences of
  operations to form a line or star from any initial spanning tree. Our work
  provides a theoretical and practical framework for topological control of a
  swarm, establishing global properties using only local information.
        
        
\end{abstract}

\section{INTRODUCTION} 
\label{sec:intro}

According to Cayley's formula~\cite{shor1995new}, there are $N^{N-2}$ possible
labeled spanning trees for a swarm network with $N$ robots. Each of these
determines different properties (e.g., coverage area, consensus rate, and
mobility) of the swarm. In this paper, we investigate the possibility of
transforming any initial spanning tree of the swarm's \RvTypoIcra{network, by only} using
local operations, to any other connected tree spanning all nodes of the system.
This transformation is of importance \RvTypoIcra{because a fixed spanning tree restricts the relative movement of a
swarm.}

In a fixed topology, the mobility of the robots is constrained since initial
neighbor robots have to remain neighbors throughout the mission even though they
might be required at different locations that are farther than the
communication range of the robots, \RvTypoIcra{whereas they could break their connection and still be connected
with a multi-hop path.}

The existence of a communication path between any pair of nodes in a robotic
network defines the connectivity of that network. Continuous connectivity (i.e.,
strict connectivity from the start of the mission to the end) between the robots
of a swarm is essential in several applications \cite{liu2021distributed,dong2018consensus} Assuming that communication is limited by distance, the existence of a
communication link between two robots depends on their position. Therefore,
robot mobility causes the topology to dynamically change, potentially breaking
links. 

\RvIcra{Connectivity maintenance is a supervisory control that blocks disconnecting action from a primary controller that pursues the primary goal of the swarm. In \cite{capelli2020connectivity} a control barrier function based controller minimally tweaks the primary controller's command such that connectivity is guaranteed.} We propose a set of operations to transform a spanning tree topology into any
other spanning tree topology using only local, connectivity-aware operations
which require data from immediate and 2-hop neighbors. \RvIcra{Then the primary controller can use these operations to have more flexibility when topology manipulation is required}. Note that data is not
propagated from farther neighbors using decentralized averaging techniques \RvTypoIcra{like~\cite{zhang2020distributed}}. Although similar local
manipulation operations have been proposed in \RvTypoIcra{the}
literature~\cite{yi2021distributed,varadharajan2020swarm,khateri2020connectivity,majcherczyk2018decentralized},
this is the first local complete set without using any global index of connectivity, to the best of the authors' knowledge.

\begin{figure}[tbp]
  \centering
  \includegraphics[width=0.45\columnwidth]{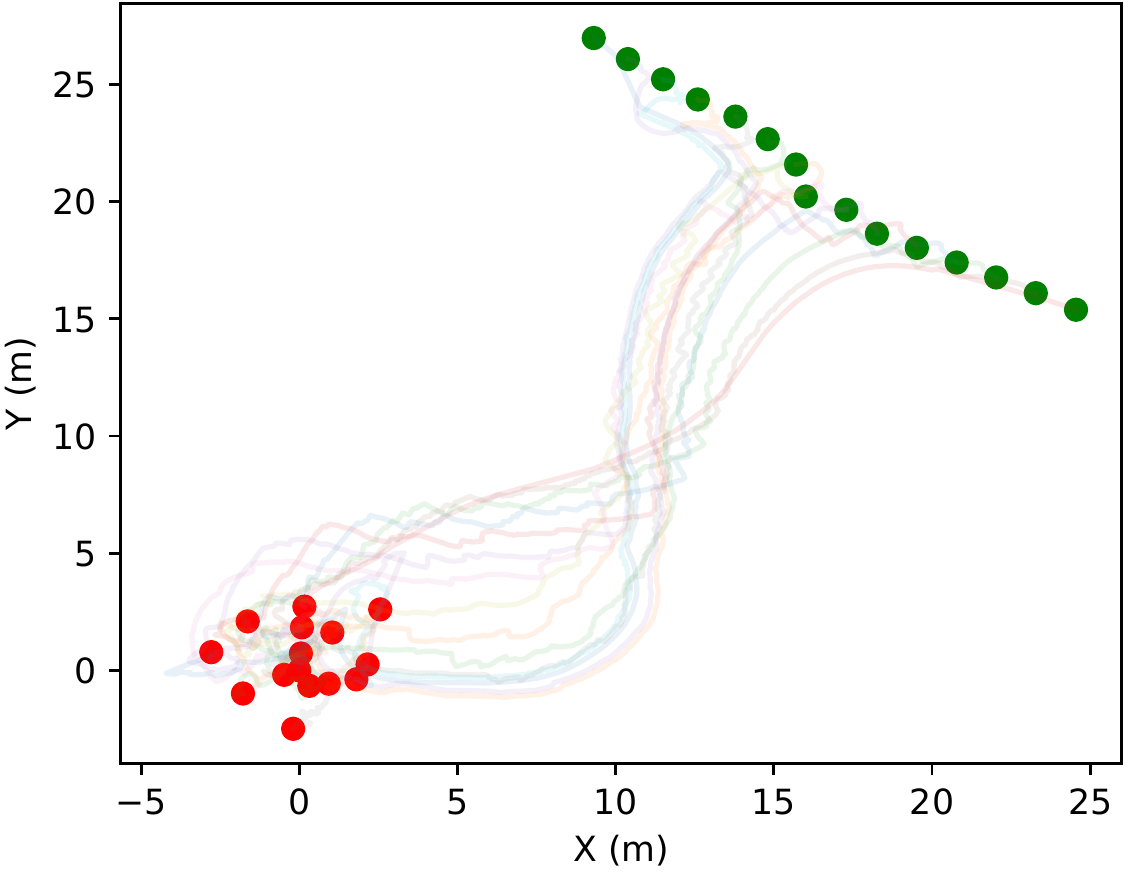}
  \includegraphics[width=0.45\columnwidth]{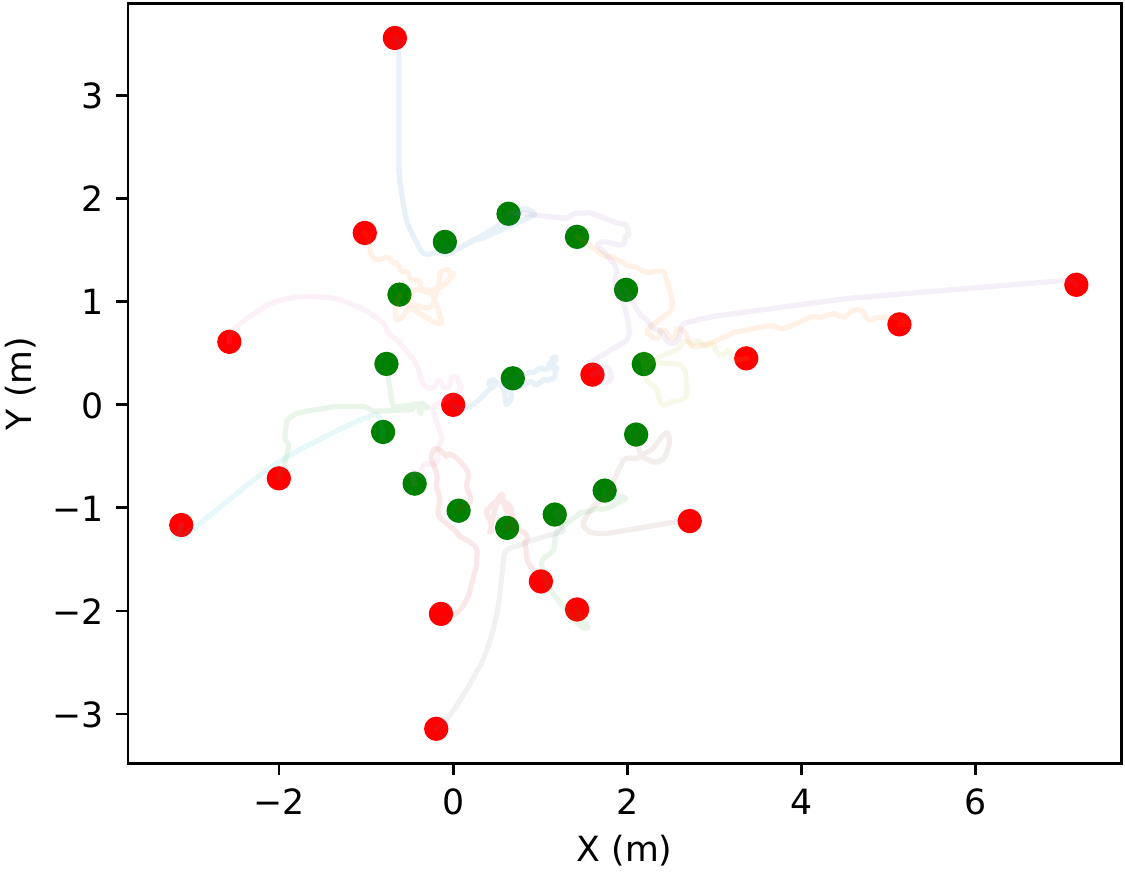}
  \includegraphics[width=0.45\columnwidth]{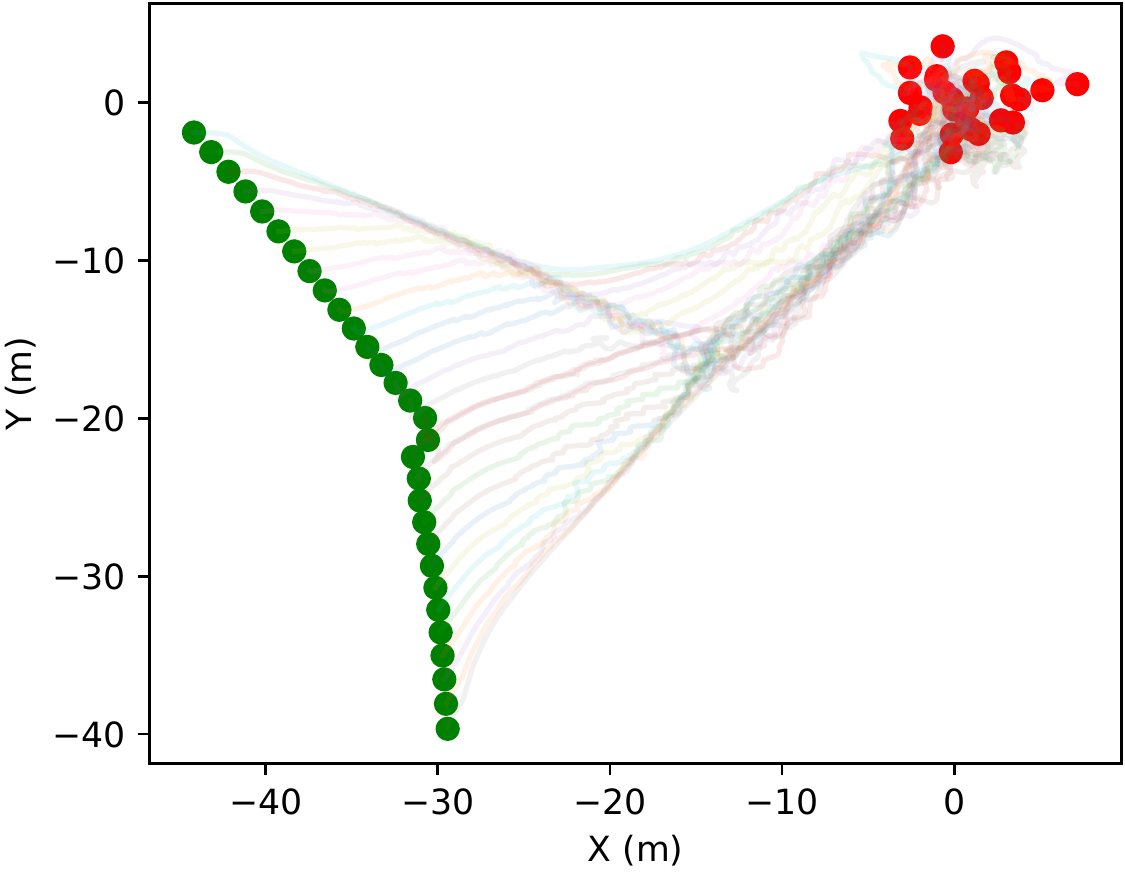}
  \includegraphics[width=0.45\columnwidth]{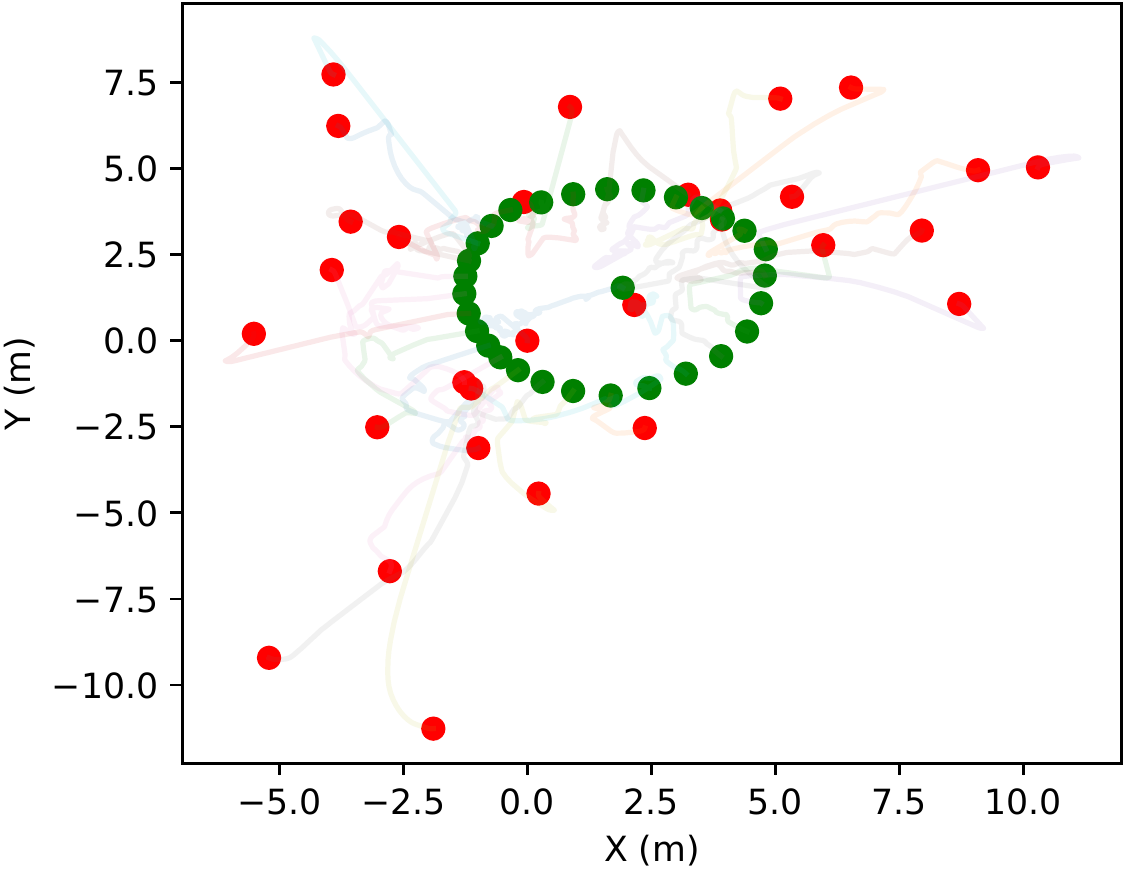}
  \includegraphics[width=0.45\columnwidth]{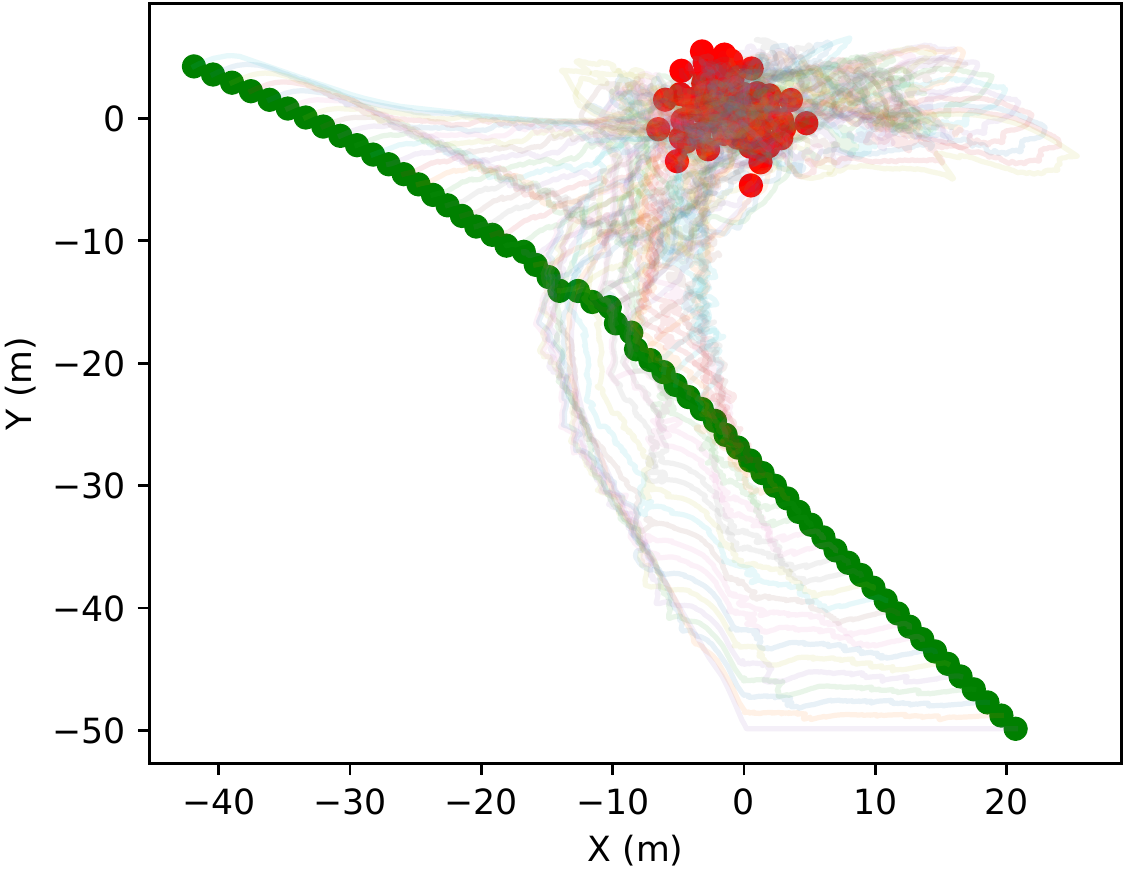}
  \includegraphics[width=0.45\columnwidth]{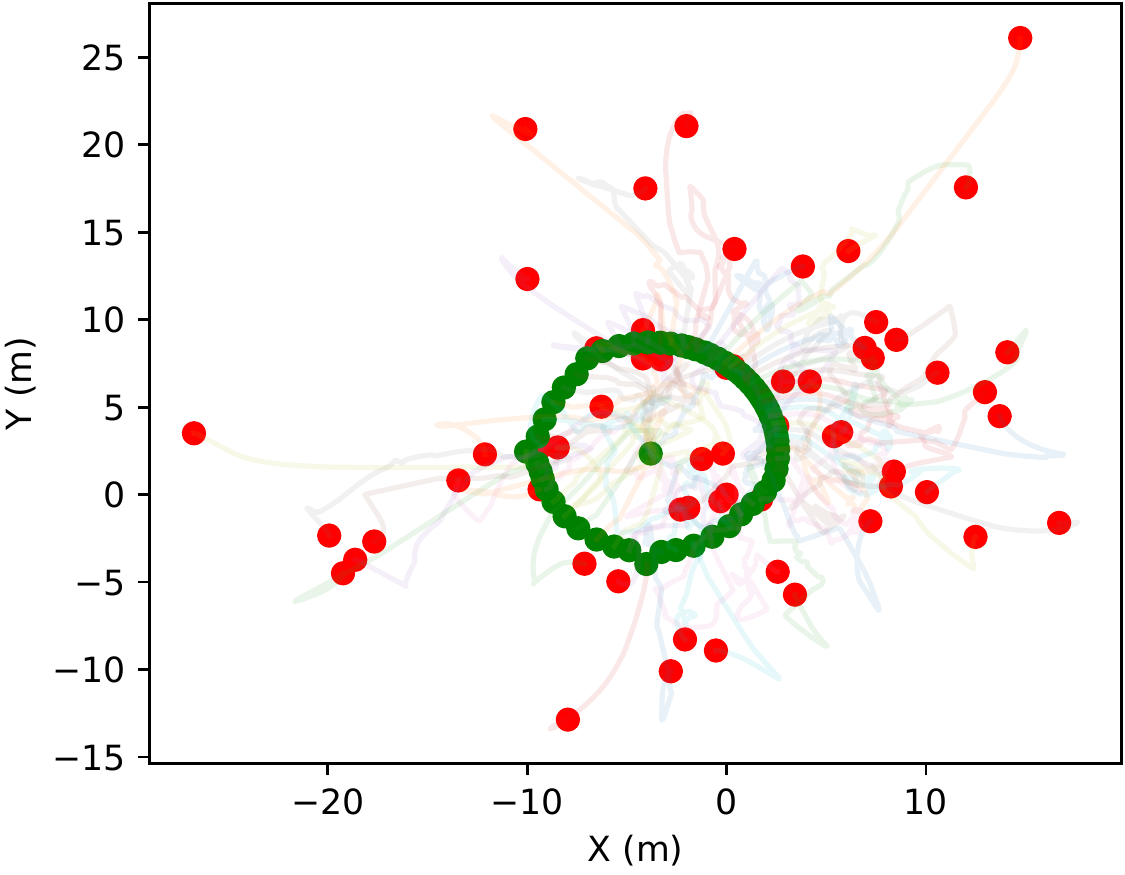}
  \caption{Trajectory visualizations for line and star formations for 15, 30, and 60 robots, column-wise. Initial positions and final positions are represented by red and green colors, respectively.}
\label{fig:introduction_figure}
\end{figure}%

\section{RELATED WORK}
The communication topology of a swarm can be fixed or vary throughout its
mission. When enforcing a fixed topology, also known as local connectivity
maintenance, every initially existing link between robots is maintained (e.g. by
the use of virtual potential functions or control barrier functions) by
enforcing an inter-robot distance that is smaller than the robots' communication
range~\cite{khateri2019comparison}. Local connectivity maintenance is a simple
approach that requires minimal communication overhead and velocity constraints,
but it restricts the relative mobility of two initially neighboring robots and
makes it impossible to change the global topological properties of the swarm (e.g.
consensus rate or coverage). With topology manipulation, the connectivity of the
network must be taken into account either continuously from start to end
\cite{sabattini2013distributed,dimarogonas2008decentralized,ajorlou2010class} or
it has to be re-established intermittently
\cite{saboia2022achord,kantaros2017distributed}.

Preserving connectivity has been an active research area, this subject is
relevant in achieving almost all cooperative tasks in multi-robot and
multi-agent systems such as flocking
\cite{liang2021decentralized,feng2019connectivity}, search and rescue
\cite{flushing2013connectivity,hayat2017multi} and formation control
\cite{fu2020distributed,peng2019path}.

Published works on continuous connectivity maintenance can be classified into three main approaches:
\begin{itemize}

\item Local connectivity
  maintenance~\cite{wen2012connectivity,ji2007distributed,hsieh2008maintaining},
  which designs the control plans to avoid any disconnection of initially
  existing communication links;
    
\item Global connectivity maintenance
  \cite{gasparri2017bounded,siligardi2019robust,sabattini2013distributed,capelli2020connectivity},
  where the breaking of communication links is allowed, as long as the overall
  communication topology remains connected. Global connectivity maintenance is
  more flexible: it allows reconfiguring the network's graph to any
  possible connected graph with the same number of vertices. The criterion for
  allowing an edge disconnection is estimated or calculated based on a global
  index which requires complete information over the graph. In the decentralized
  \RvIcra{version of this approach interactions are local, such that} iterative distributed local averaging or consecutive local broadcast
  is used to collect information in order to estimate this global index.
  
\item Connectivity-aware local topology manipulation operations: these methods
  are enhancing the local connectivity maintenance method by performing local
  topology manipulations while maintaining connectivity. A node permutation
  method to locally swap neighbors is proposed in
  \cite{khateri2020connectivity}. A chained swarm relay method to provide
  maximum coverage from a ground station is investigated in
  \cite{varadharajan2020swarm}. A logical tree acts as a communication backbone
  in \cite{majcherczyk2018decentralized} where two topology manipulation
  operations called outward and inward are utilized to add robots to this
  backbone tree until they reach predefined targets. However, these methods do
  not provide the same flexibility compared to global connectivity maintenance methods.
\end{itemize}

\RvIcra{From the literature on theoretical distributed dynamic networks, an approach~\cite{Michail2020} similar to our star formation algorithm is proposed for reconfigurable robots. In~\cite{michail2019} the authors propose the use of rotation and slide operations which are similar to our to be defined leafization and leaf transfer operations for reconfigurable robots arranged in 2-dimensional grids. They have shown these two operations are universal in a centralized mode as the operations can make changes to the topology in the x and y axes respectively. They also propose a line formation algorithm that unlike our method requires global information.}

In this paper, we investigate the possibility of defining connectivity-aware
local topology manipulation operations in such a way as to provide the same
flexibility as with global methods. We demonstrate that there is a complete
set of operations that can manipulate the swarm's
topology in any possible connected form.

The main reason for developing connectivity-aware local topology manipulation
operations is the fact that the decentralized estimation of global connectivity
indices is not scalable and it is very sensitive to
noise~\cite{varadharajan2020swarm,khateri2019comparison}. Despite the
flexibility that a global method provides, the convergence time needed to
estimate a global index restricts the speed of
robots~\cite{khateri2019comparison}. This phenomenon is caused by practical
constraints like finite bandwidth and/or communication delays. The superiority
of global methods in some
works~\cite{sabattini2013decentralized,sabattini2013distributed} is due to
assuming continuous, infinite bandwidth and cost-less communication. It is
worth noting that in a global method communications are on $(N-1)-$hop path, in
the worst case. Thus, it can have a significant impact on a large-scale network.
However, in a local method communication paths are one or two hops. Therefore,
the communication delay is insignificant and can be ignored. In a real-world
situation, there is a trade-off between graph flexibility and the maximum
allowed speed of robots.

The algebraic connectivity which is the second smallest eigenvalue $\lambda_2$
of the Laplacian matrix $L$, associated with the communication graph, is a
connectivity index that is employed to indicate the connectivity of the network.
$\lambda_2$ is shown to be a concave function of the Laplacian
matrix~\cite{merris1994laplacian}. Therefore, optimization approaches to
calculating the control inputs had been proposed in order to maximize it
\cite{de2006decentralized,kim2005maximizing}. However, calculating the eigenvalues of a
matrix requires full knowledge of the matrix. A centralized approach to
calculate eigenvalues and eigenvectors is used in \cite{kim2005maximizing}.
Power iteration is used in designing a decentralized algorithm to estimate
algebraic connectivity in \cite{yang2010decentralized}. Based on this
decentralized estimator, several global connectivity maintenance methods to
apply control inputs such that $\lambda_2$ is preserved positive are proposed
\cite{sabattini2013distributed,sabattini2013decentralized,secchi2012decentralized}.
A connectivity maintenance supervisor considering time delays using control barrier
functions to preserve $\lambda_2$ more than a threshold is proposed in
\cite{capelli2021decentralized}.

Another approach to global connectivity maintenance is by designing a controller
such that only links with an alternative $k-$hop path are allowed to disconnect.
The length of a path in a graph with $N$ nodes is less than $N-1$. Thus, global
connectivity of the network could be maintained with $k=N-1$
\cite{zavlanos2011graph} and $k=1$ corresponds to local connectivity. In
\cite{zavlanos2005controlling} a centralized controller is proposed to maintain
alternative paths of different sizes. Based on this view, authors in
\cite{khateri2020decentralized} have proposed a decentralized version to
preserve $k$-hop connectivity that uses a modified dynamic source routing method
to find alternative paths with a predetermined length. Thus, the locality level
of the search for an alternative path is selected by the designer.

\section{PRELIMINARIES}
\label{sec:prelim}
A limited connectivity range swarm of $N$ robots is modeled with a distance-dependent graph. Let $G(t)=(V_G,E_G(t))$ be a graph with the set of vertices
$V_G=\{v_1,\cdots,v_N\}$ and the edges $E_G(t)=\{(i,j) \in V_G \times V_G \mid
A^G_{i,j}(t)>0 \}$ where $A^G(t)$ is the adjacency matrix. \RvIcra{In this paper, $A^G_{i,j}(t)=1$, if there}
is an edge $(i,j)$ at time $t$. Otherwise, $A^G_{i,j}(t)=0$. A graph $G$
is considered as limited range if $A_{i,j}=0$ when $\norm{x_i-x_j}>R$, where $x_i$, $x_j$ and $R$
are the position of nodes $i$, $j$, and the communication range,
respectively. The neighbors of node $v_i$ at time $t$ are $N^G_i(t)=\{v_j \in V_G
\mid A^G_{i,j}(t)>0 \}$.

As in \cite{MesbahiEgerstedt+2010}, the degree matrix associated with a graph $G$ is defined as:
\begin{equation}\label{eq:Degree}
    D^G_{i,j}(t)=\begin{cases}
    \sum_{k \in N_i} A^G_{i,k}(t) & i=j\\
    0 & \text{otherwise}
  \end{cases}
\end{equation}
and the Laplacian matrix is $L^G(t)=D^G(t)-A^G(t)$. The second smallest
eigenvalue of the Laplacian matrix $\lambda_2$ is an index of the graph
connectivity. The degree of a node $\text{degree(node)}$ is defined as the
number of edges connected to that node.

$\text{Vert(G)}$ is a function that returns the vertices of a graph.

A tree $T(t)=(V_T,E_T(t))$ is a graph in which any two vertices are connected by
exactly one path. A spanning tree of a connected graph $G$ is a subgraph of $G$,
which is a tree and contains all vertices of $G$. Let "$\setminus$", "$\abs{.}$"
be the set complement operator and the size of a set, respectively. A leaf of a
tree $T$ is a node with only one edge connected to it. A super-leaf is a
sub-tree $S_j$ with $j$ as its root where, $A_{i\in \{V_{S_j}\setminus j\},k
  \in \{V_T\setminus V_{S_j}\}}=0$ and $\abs{N_j\setminus V_{S_j}}=1$.
Which means nodes in $V_{S_j}\setminus j$ do not have any edge with nodes in
$V_T\setminus V_{S_j}$ and node $j$ is connected to only one of the nodes
in $ V_T\setminus V_{S_j}$.

The following statements are equivalent for an undirected graph:
\begin{enumerate}[label=(\alph*)]
\item The graph is connected.
\item There is at least one path between each pair of nodes in the graph.
\item $\lambda_2>0$.
\item The graph has at least one spanning tree.
\end{enumerate}

\section{MAIN RESULTS} \label{sec:main}

In this section, \RvIcra{we only deal with the existence of a complete set of local operations. The implementation of such operations is shown for the case of line formation and star formation in the supplementary material~\cite{suppmaterial}.} We define our set of operations that are able to manipulate
connected trees to transform them into any other possible connected tree of the
same size. Through the unique Pr\"ufer bijection of trees to strings, we show
the completeness of our set of operations. A Pr\"ufer sequence is a bijection
between the set of labeled connected trees on $N$ vertices with the set of
sequences of length $N-2$. The manipulation operations are provided to transform
the Pr\"ufer sequence of the tree to any other Pr\"ufer sequence of length
$N-2$. Lemma~\ref{lem:transformation of all trees} proves the possibility of
transforming a spanning tree into any other spanning tree of the same size.
Finally, Theorem~\ref{th:reducibility} shows equality, with respect to
the freedom of movement of robots, of the proposed method with the traditional
global connectivity maintenance methods.

Consider labeling nodes of a tree with natural numbers $i \in \mathbb{N}$,
assume the leaf with the lowest number to be $l_1$ and let the node connected to
$l_1$ be node $a$, note that because $l_1$ is a leaf, $a$ is the unique node
connected to $l_1$, then consider that after deleting $l_1$, $l_2$ is the leaf
with the lowest label number, which is connected to node $b$. By repeating this
process until there remain only two nodes, a sequence of labels of the connected
nodes to the lowest leaves could be created, which is called the Pr\"ufer
sequence of the tree (i.e. $\{a,b,\cdots\}$ here).

\begin{lemma} \label{lem:Prufer} \cite{prufer1918neuer} Every labeled tree on
  $N$ nodes has one and only one Pr\"ufer sequence of length $N-2$.
\end{lemma}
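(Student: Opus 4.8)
The statement has two halves --- that a Pr\"ufer sequence \emph{exists} for every labeled tree and that it is \emph{unique} --- and I would handle both simultaneously by induction on the number of nodes $N$, viewing the construction described just above the lemma as a deterministic algorithm. The base case $N=2$ is immediate: a tree on two nodes is a single edge, the construction terminates at once with two nodes remaining, and it outputs the empty sequence, which has length $N-2=0$ and is the only possible output.

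For the inductive step I would assume that every labeled tree on $N-1$ nodes has a unique Pr\"ufer sequence of length $N-3$ and take an arbitrary labeled tree $T$ on $N\geq 3$ nodes. The first fact to pin down is that $T$ has at least one leaf; in fact any finite tree on at least two vertices has at least two leaves, which follows from counting degrees: a tree has exactly $N-1$ edges, so $\sum_v \deg(v)=2(N-1)$, and if at most one vertex had degree one then the remaining $N-1$ vertices would contribute at least $2(N-1)$ to this sum while the last contributes at least one, forcing $\sum_v \deg(v)\geq 2N-1$, a contradiction. Consequently the lowest-labeled leaf $l_1$ is well defined and unique, and since a leaf is incident to exactly one edge its neighbor $a$ is uniquely determined.

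The second fact I would use is that deleting a leaf from a tree leaves a tree on one fewer vertex: removing $l_1$ together with its single edge creates no cycle and disconnects nothing, since no path between two surviving vertices could have routed through the degree-one vertex $l_1$. Thus $T'=T\setminus\{l_1\}$ is a tree on $N-1$ nodes, and by the inductive hypothesis it admits a unique Pr\"ufer sequence $s'$ of length $N-3$. Prepending $a$ to $s'$ produces a sequence of length $1+(N-3)=N-2$, which is exactly the output of the construction applied to $T$; this establishes existence. Uniqueness is then forced step by step: $l_1$, and hence $a$, are determined with no freedom, and $s'$ is unique by the inductive hypothesis, so $T$ cannot yield any other sequence.

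I expect the only genuine obstacle to be the first structural fact --- guaranteeing that a leaf is available at \emph{every} stage so that the algorithm never stalls and runs for precisely $N-2$ steps. Everything else (the determinism giving uniqueness, the length bookkeeping, and the leaf-deletion lemma) is routine. I would also note that the stronger claim hinted at in the surrounding text --- that this map is a \emph{bijection} onto all length-$(N-2)$ sequences --- is not required here: the lemma asserts only that each tree is sent to exactly one sequence, so exhibiting an explicit inverse can be left aside.
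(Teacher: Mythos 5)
Your proof is correct, but note that the paper itself offers no proof of this lemma: it is stated as a classical result with a citation to Pr\"ufer's 1918 paper, and the text preceding the lemma only describes the leaf-removal construction informally. Your attempt therefore diverges from the paper simply by supplying what the paper omits — a self-contained induction on $N$ showing that the construction is well-defined at every stage (a lowest-labeled leaf always exists, by your degree-sum argument), never stalls, and deterministically outputs exactly one sequence of length $N-2$. All the steps are sound: the base case $N=2$, the fact that deleting a degree-one vertex leaves a tree, and the key identity that the construction applied to $T$ equals $a$ prepended to the construction applied to $T\setminus\{l_1\}$, which is what lets the inductive hypothesis bite. One caveat on scope: you are right that the literal statement only requires the tree-to-sequence map to be a well-defined function, but the paper's downstream use of this lemma (in its tree-transformation lemma, where achieving a tree whose Pr\"ufer sequence equals the target's is taken to mean achieving the target tree itself) tacitly invokes \emph{injectivity} of that map, i.e., distinct trees yield distinct sequences. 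So while your proof fully discharges the statement as written, a self-contained version of the paper would also need the converse direction that you explicitly set aside, either via the standard inverse construction or by arguing directly on edge sets $\{(l_m,p_m)\}$.
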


In this paper, we have defined these four topology manipulation operations for
changing the topology of a tree as follows:

\begin{definition}
Leafization $L(j,k)$: this operation is to transform a non-leaf node into a leaf node. Consider node $j$ to be a non-leaf node with a neighbor $k \in N_j(t)$ and $\abs{N_j(t)}>1$. With the leafization operation, all the edges $\{(j,p\in \{N_j\setminus k\})\}$  will be disconnected and new edges $\{(k,p\in \{N_j\setminus k\})\}$ will be established.
\end{definition}    
\begin{definition}
Leaf transfer $LT(l,j,k)$: This operation is to transfer a leaf from one node to another node. Consider node $l$ as a leaf connected to node $j$ while node $j$ is a neighbor of node $k$. Leaf transfer of leaf $l$ from node $j$ to node $k$ is to disconnect the edge $(l,j)$ and establish a new link $(l,k)$.
\end{definition}    
\begin{definition}
Super leafization $SL(S_j,k)$: Consider $S_j$ to be a sub-tree rooted in $j$ (i.e. $A_{i\in \{Vert(S_j)\setminus j\},k \in  \{Vert(T)\setminus Vert(S_j)\}}=0$, where $A_{i,k}$ means the $i,k$th element of the adjacency matrix). Consider node $k$ such that $k \in \{N_j\setminus Vert(S_j)\}$ and $\abs{N_j\setminus Vert(S_j)}>1$. With the super-leafization operation, all the edges $\{(j,p\in N_j\setminus{\{k\cup Vert(S_j)}\})\}$ will be disconnected and new edges $\{(k,p\in N_j\setminus\{k\cup Vert(S_j)\})\}$ will be established.
\end{definition}   
\begin{definition}
Super leaf transfer $SLT(S_l,j,k)$: This operation is to transfer a super leaf connected to one node to another node. Consider super node $S_l$ as a super leaf connected to node $j$ while node $j$ is a neighbor of node $k$. Super-leaf transfer of $S_l$ from node $j$ to node $k$ is to disconnect the edge $(l,j)$ and establish a new link $(l,k)$.
\end{definition}

\begin{remark}
The $SL(S_j,k)$ and $SLT(S_l,j,k)$ operations are generalizations of $L(j,k)$ and $LT(l,j,k)$ operations, respectively. \RvIcra{$SL(S_j,k)$ and $SLT(S_l,j,k)$ are still local operations as only the edges between the root of the super node and the neighbors are affected. All the other edges in the super nodes are left untouched.}
\end{remark}

An example of these operations is illustrated in Figure~\ref{fig:Tree transform operations}.   
    \begin{figure} 
  \centering
  \subfloat[]{
       \includegraphics[width=.45\linewidth]{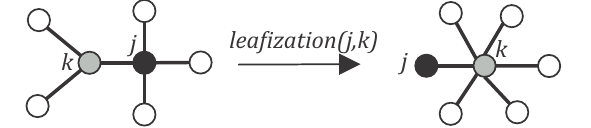}}    
  \subfloat[]{
        \includegraphics[width=.45\linewidth]{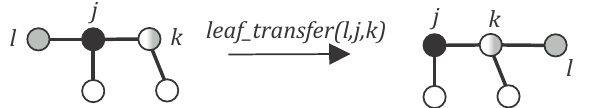}}
    \\
      
     \subfloat[]{
        \includegraphics[width=.45\linewidth]{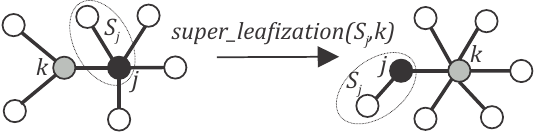}}  
    \subfloat[]{
        \includegraphics[width=.45\linewidth]{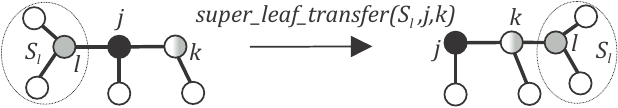}}

  \caption{(a) $L(j,k)$, (b) $LT(l,j,k)$, (c) $SL(S_j,k)$, and (d) $SLT(S_l,j,k)$ operation.}
  \label{fig:Tree transform operations} 
\end{figure}

\begin{lemma}\label{lem:connectivity of transformation} Under the defined tree manipulation operations the result remains a tree. \\
\begin{proof}
Regarding the $L(j,k)$ operation: Let $N_j(0)=\{j_1,j_2,\cdots\}$ be the set of neighbors of node $j$ before the operation. Consider a pair of nodes $s$ and $d$ that are connected by a path including node $j$, before the operation. Let the unique initial path be $\{s,\cdots,j_x,j,j_y,\cdots,d\}$ then, after the leafization operation, this pair of nodes are connected uniquely, by the path $\{s,\cdots,j_x,k,j_y,\cdots,d\}$. Because, with the operation all the neighbors of $j$ will be connected to $k$ as defined in the operation's definition and $\{j,k\}$ is the unique path between nodes $j$ and $k$. Thus the path $\{s,\cdots,j_x,k,j_y,\cdots,d\}$ is the only path between $s$ and $d$. Hence, there is a unique path between each pair of nodes after the operation. Therefore, the initial-tree remains a tree and is connected.

The proof for $LT$ operation is trivial and the proof for $SL$ and the $SLT$ operation are omitted, as they are generalizations of the $L$ and $LT$ operations.
\end{proof}
\end{lemma}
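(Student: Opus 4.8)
The plan is to verify that each operation preserves the two invariants that characterize a tree on a fixed set of $N$ vertices: having exactly $N-1$ edges and being connected. Since a connected graph on $N$ vertices with $N-1$ edges is necessarily a tree, these two checks suffice, and they can be carried out uniformly across all four operations. The edge count is immediate: every operation is defined as the simultaneous deletion of a set of edges together with the addition of an equal-sized set of new edges --- leafization deletes the $\abs{N_j\setminus k}$ edges incident to $j$ and recreates the same number at $k$, leaf transfer swaps one edge for one edge, and the super-versions do likewise --- so the total stays at $N-1$.

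For connectivity I would argue by counting connected components rather than by tracing paths. Take leafization $L(j,k)$ and let $d=\abs{N_j}$. Deleting the $d-1$ edges $(j,p)$ with $p\in N_j\setminus k$ from the tree produces a forest with $d$ components: the component containing both $j$ and $k$, and one component for each detached neighbor $p$. Reinstating the $d-1$ edges $(k,p)$ then joins the $k$-component to each of the $d-1$ neighbor-components in turn, collapsing the forest back to a single component. Hence the result is connected, and together with the edge count it is a tree. I would then observe that $LT$, $SL$, and $SLT$ are covered by the identical template: $LT$ detaches the leaf $l$ (two components) and reattaches it at $k$; and for the super operations I would invoke the defining property of a super-leaf --- that $S_j$ touches the rest of the tree only through its root $j$ --- so that $S_j$ behaves as a single indivisible unit, exactly like a leaf, under detachment and reattachment.

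The step that actually uses the tree hypothesis, and thus the main point to get right, is the claim that deleting the $d-1$ edges at $j$ leaves each affected neighbor $p$ in a component distinct from that of $k$, so that every reattaching edge $(k,p)$ bridges two previously separate components and therefore cannot close a cycle. This rests on the elementary fact that removing a single edge of a tree disconnects its two endpoints; iterating over the edges incident to $j$ then isolates the neighbors into pairwise-distinct components. For the super operations the corresponding subtlety is verifying that, once $S_j$ is severed at its root, it stays a single component; this is exactly what the super-leaf condition $A_{i\in \{V_{S_j}\setminus j\},k \in \{V_T\setminus V_{S_j}\}}=0$ guarantees, and it is the one place where the super-leaf definition must be read carefully rather than treated as a cosmetic generalization of the plain leaf case.
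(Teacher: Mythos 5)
Your proof is correct, but it takes a genuinely different route from the paper's. The paper argues directly that the defining property of a tree (a unique path between every pair of vertices) is preserved: it traces an arbitrary path $\{s,\cdots,j_x,j,j_y,\cdots,d\}$ through $j$ and shows it is replaced by the unique path $\{s,\cdots,j_x,k,j_y,\cdots,d\}$ through $k$; it then dismisses $LT$ as trivial and $SL$, $SLT$ as generalizations, with no further argument. You instead use the characterization ``connected $+$ exactly $N-1$ edges $\Rightarrow$ tree'' and verify both invariants by a deletion/reinsertion component count: removing the $d-1$ edges at $j$ splits the tree into exactly $d$ components (each deletion in a forest raises the component count by one), and each reattaching edge $(k,p)$ bridges two distinct components, so no cycle can close and connectivity is restored. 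Your approach buys two things the paper's does not. First, uniformity: the same template genuinely covers all four operations, whereas the paper's path-rewriting argument is only spelled out for $L(j,k)$. Second, rigor at exactly the points the paper glosses over: your component argument implicitly establishes that the new edges $(k,p)$ did not already exist (otherwise $p$ would share a component with $k$ after deletion, contradicting acyclicity), and you explicitly invoke the super-leaf condition $A_{i\in \{V_{S_j}\setminus j\},k \in \{V_T\setminus V_{S_j}\}}=0$ to justify that a severed super-leaf remains one component --- this is the substantive content hiding behind the paper's phrase ``omitted, as they are generalizations.'' The paper's approach, in exchange, is shorter and stays closer to the intuition of how a path physically re-routes through $k$, which is arguably the picture the topology-manipulation operations are meant to evoke. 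Both are valid; yours would withstand scrutiny of the super operations better.
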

\RvIcra{Please note that the following lemma is only showing the existence of a series of operations for transforming any tree and the Pr\"ufer sequence and labeling nodes are only used for proving the lemma. For implementing our method, we need to design algorithms based on our manipulation operations like what we have illustrated in supplementary material~\cite{suppmaterial}}
\begin{lemma}
\label{lem:transformation of all trees} 
Every tree in the set of all possible connected trees with $N$ nodes is transformable to any other connected tree of the same size, by using the defined topology manipulation operations.
\end{lemma}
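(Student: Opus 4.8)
The plan is to use the Pr\"ufer bijection (Lemma~\ref{lem:Prufer}) to turn the tree-to-tree problem into a reachability statement, and then to prove reachability through a canonical target together with reversibility of the operations. Since the Pr\"ufer map is a bijection between labeled trees on $N$ nodes and words of length $N-2$ over $\{1,\dots,N\}$, and every such word is realized, it suffices to exhibit for each pair of trees $T,T'$ a sequence of operations taking $T$ to $T'$. I would obtain this by fixing one canonical tree $T_r$ — the star centered at a node $r$, whose Pr\"ufer word is the constant string $r^{N-2}$ — and showing (i) every tree can be driven to $T_r$, and (ii) every operation can be undone by operations from the set. Granting (i) and (ii), reachability is symmetric and transitive, so $T \to T_r \to T'$ gives the desired transformation; in Pr\"ufer terms this is exactly ``bring the word to $r^{N-2}$, then run the reverse of the sequence that brings $T'$ to $r^{N-2}$.''

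For step (i) I would root the tree at $r$ and repeatedly apply leafization toward $r$. If $j$ is a non-leaf node adjacent to $r$, then $L(j,r)$ detaches all children of $j$ and reconnects them, with their subtrees, directly to $r$, while $j$ itself becomes a leaf of $r$. Each such move lifts an entire level one step closer to $r$, so the potential $\Phi=\sum_v \mathrm{dist}(v,r)$ strictly decreases; as long as the tree is not yet the star a non-leaf node adjacent to $r$ exists, so the process continues and, since $\Phi$ is bounded below, terminates precisely at $T_r$. By Lemma~\ref{lem:connectivity of transformation} every intermediate configuration is a tree, and each $L(j,r)$ only touches edges between $j$, its neighbors, and $r$, so it stays within the required two-hop locality.

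The crux — and the step I expect to be the main obstacle — is the reversibility in step (ii). Leaf transfer is easy: $LT(l,j,k)$ is undone by $LT(l,k,j)$, since the edge $(j,k)$ is never disturbed and remains available. Leafization is the delicate case, because its inverse is not another leafization. Here I would argue that after $L(j,k)$ each relocated neighbor $p$ of $j$ is the root of a subtree $S_p$ that hangs off $k$ and touches the rest of the tree only through the single edge $(p,k)$; hence $S_p$ qualifies as a super-leaf attached to $k$, and since $j$ is still adjacent to $k$, the super-leaf transfer $SLT(S_p,k,j)$ reattaches $p$ to $j$. Performing this for every relocated neighbor restores the original tree, so $L(j,k)$ is reversible using the super operations. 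The main work is therefore in verifying carefully that the relocated pieces genuinely satisfy the super-leaf definition at each stage, and that transferring them one at a time keeps the remaining pieces valid super-leaves of $k$ — precisely where the super-leafization and super-leaf-transfer operations earn their place in the set.

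I would also record the more direct Pr\"ufer-word route, which changes the target word one coordinate at a time: moving a leaf to a different parent changes only the single coordinate recorded when that leaf is deleted, because deleting it yields the same residual tree. The obstacle there is that a local leaf transfer only moves a leaf to an adjacent node, so realizing an arbitrary reattachment requires walking the leaf across the tree, and such a walk can create a lower-labeled leaf that changes the deletion order and corrupts already-fixed coordinates. Controlling this requires the same super operations and a careful processing order, which is why I prefer the canonical-form argument above as the cleaner path to the result.
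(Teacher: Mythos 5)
Your proof is correct, but it takes a genuinely different route from the paper's. The paper works directly with the Pr\"ufer sequence $\{p_1,\dots,p_{N-2}\}$ of the \emph{target} tree and realizes it coordinate by coordinate: at step $m$ it turns $l_m$ (together with the subtrees rooted at already-processed nodes among its neighbors) into a leaf or super-leaf, then walks it along the unique path to $p_m$ by (super) leaf transfers; there the super operations serve to carry previously placed structure along, and Pr\"ufer's bijection (Lemma~\ref{lem:Prufer}) is the backbone of the argument. You instead prove reachability of a canonical form --- every tree collapses to the star $T_r$ by leafizations toward $r$, with termination certified by the potential $\Phi=\sum_v \mathrm{dist}(v,r)$ and by the fact that a non-star tree always has a non-leaf neighbor of $r$ --- and then reverse the sequence $T' \to T_r$, using that $LT(l,j,k)$ is undone by $LT(l,k,j)$ and that $L(j,k)$ is undone by the super leaf transfers $SLT(S_p,k,j)$ over the relocated neighbors $p$ (which, as you correctly check, remain valid super-leaves of $k$ as they are peeled off one at a time). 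Note that since step (i) uses only leafizations, inverting $L$ and $LT$ is all you need; you never have to invert $SL$ or $SLT$ themselves. Both arguments are sound. Yours buys a cleaner modular structure (reachability becomes symmetric and transitive once inverses exist), uses Pr\"ufer only as motivation rather than as machinery, and isolates exactly why the super operations belong in the set: they are precisely the inverses of leafization. The paper's version buys a sequence aimed directly at the target tree and makes the role of Pr\"ufer sequences explicit, at the cost of the deletion-order bookkeeping you identify as the obstacle in your last paragraph --- the paper handles it with the same super-leaf device you use for inversion. Incidentally, your route through the star is essentially the star-formation algorithm the paper implements in its experiments, so the detour through the canonical form is not as artificial as it might appear.
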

\begin{proof}
Let us call the initial tree, resulting-tree($m$), and the final tree as the tree we start from, the result after $m$ steps and the tree we want to end up with, respectively.
Consider the final-tree's Pr\"ufer sequence to be $\{p_1,p_2,\cdots,p_{N-2}\}$ and the sequence of leafs with lowest label while composing this Pr\"ufer sequence be $\{l_1,l_2,\cdots,l_{N-2}\}$. Beginning with any arbitrary initial tree, if $l_1$ is not a leaf of the initial tree, it could be transformed into a leaf, by using the $\text{leafization}$ operation. In a connected tree, there is a unique path between any pair of nodes. Thus, there is a unique path between $l_1$ and node $p_1$. Then, using a sequence of leaf transfer operations, $l_1$ can move through that path to establish an edge with $p_1$. After this step, $p_1$ will be the first element of the final-tree's Pr\"ufer sequence, as desired. At step $m$, if the neighbors of $l_m$ in the resulting-tree($m-1$) (i.e., $N_{l_m}(m-1)$ ) contains any node in the set $\{l_1,\cdots,l_{m-1}\}$ (i.e., $N_{l_m}(m-1)\cap\{l_1,\cdots,l_{m-1}\}\neq \emptyset$) then consider the subtree, containing $l_m$ and all the subtrees rooted in nodes $N_{l_m}(m-1)\cap\{l_1,\cdots,l_{m-1}\}$, if this subtree is not a super leaf of the resulting-tree($m-1$), it could be transformed into a super leaf, by using the $\text{super\_leafization}$ operation. Then, there is a unique path between $l_m$ and node $p_m$. Thus, using a sequence of super leaf transfer operations, $l_m$ can move through that path to establish an edge with $p_l$. This makes $p_m$ the $m$th item in the final-tree's Pr\"ufer sequence.  
As stated in Lemma~\ref{lem:Prufer}, the complete set of trees on $N$ nodes are uniquely described by Pr\"ufer sequences. Therefore, the possibility of transforming an arbitrary initial tree into a tree with a desired Pr\"ufer sequence by only using certain operations means the possibility of achieving any desired tree using those operations. 
\end{proof}

The following theorem shows that all achievable benefits from global connectivity maintenance are possibly achievable using a modified local method, in which the initial tree is maintained by local maintenance of the initial edges of the tree, while edges are only allowed to break or be created by using the topology manipulation operations.  
  
\begin{theorem} \label{th:reducibility}
Global connectivity maintenance, for distance-dependent networks, is reducible to local topology manipulation with defined tree manipulation operations.
\begin{proof}
Global connectivity maintenance needs an initially connected graph. A spanning tree of this graph could be selected as the initial tree to be maintained connected in the local topology manipulation scheme. At any instance or after reaching any desired configuration (i.e breaking edges and creating new ones), with the global connectivity maintenance, a new connected graph is connecting the nodes of the network, which also contains a spanning tree. This former spanning tree is reachable from the initial tree with our topology manipulation operations, according to Lemma~\ref{lem:transformation of all trees}, and it is connected, based on Lemma~\ref{lem:connectivity of transformation}. Therefore, any configuration reachable with global connectivity maintenance is also achievable with local tree maintenance with tree manipulation operations.

\end{proof}
\end{theorem}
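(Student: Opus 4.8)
The plan is to establish a reduction: I would show that every network configuration attainable under a global connectivity maintenance scheme can be reproduced, as far as its connectivity structure is concerned, by a finite sequence of the four defined local operations applied to a maintained spanning tree. First I would observe that global connectivity maintenance presupposes an initially connected graph $G(0)$; by the equivalence recorded in Section~\ref{sec:prelim} (a connected graph possesses at least one spanning tree), I can select a spanning tree $T(0)$ of $G(0)$ to serve as the maintained backbone for the local scheme, with all of its edges preserved by local connectivity maintenance.

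Next I would characterize the set of configurations reachable by the global method. By definition, a global controller permits communication links to break and to form as long as the overall graph remains connected; hence at any time $t$ the resulting graph $G(t)$ is connected and therefore, again by the equivalence in Section~\ref{sec:prelim}, contains a spanning tree $T(t)$ on the same $N$ vertices. The crux of the reduction is to realize $T(t)$ using only local operations, starting from $T(0)$.

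This is precisely what Lemma~\ref{lem:transformation of all trees} supplies: any spanning tree on $N$ nodes is transformable into any other spanning tree on $N$ nodes via the defined operations. Applying it with source $T(0)$ and target $T(t)$ yields a finite sequence of leafization, leaf-transfer, super-leafization, and super-leaf-transfer operations that achieves $T(t)$. By Lemma~\ref{lem:connectivity of transformation}, every intermediate result of this sequence is itself a tree, hence connected, so the continuous-connectivity guarantee demanded of the global method is matched step by step by the local scheme. Combining these observations establishes that any configuration reachable by global connectivity maintenance is achievable by local topology manipulation, which is the asserted reduction.

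The main obstacle I anticipate is conceptual rather than computational: pinning down what \emph{reducible} should mean. The claim concerns equivalence with respect to the freedom of movement of the robots, so the essential point is to argue that the relevant notion of equivalence is the ability to reach the same family of connected graphs, equivalently the same reachable spanning trees, and that restricting to first- and second-hop information does not shrink this reachable set. Both facts are already encoded in the setup — locality is built into the operation definitions, and completeness is delivered by Lemma~\ref{lem:transformation of all trees} — so the remaining work is mainly to phrase the correspondence between a global target graph and an extracted target spanning tree cleanly, so that the simulation of the global scheme by the local one is unambiguous.
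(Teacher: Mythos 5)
Your proposal is correct and follows essentially the same argument as the paper: select a spanning tree of the initially connected graph, extract a spanning tree from any globally-reachable connected configuration, and invoke Lemma~\ref{lem:transformation of all trees} together with Lemma~\ref{lem:connectivity of transformation} to realize it locally while preserving connectivity throughout. Your added discussion of what \emph{reducible} means is a reasonable clarification but does not change the substance of the argument.
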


\begin{remark}
The tree manipulation operations need information, only from immediate (i.e. 1-hop) neighbors and 2-hop neighbors. Therefore, the operations are considered to be local.
\end{remark}

\begin{remark}  
Theorem \ref{th:reducibility}, only shows the possibility of the reduction of global connectivity maintenance to a modified local connectivity maintenance. It does not provide a sequence of operations needed to transform an initial tree into a desired final tree. To find the sequence in which the operations should be done we need to have the target spanning tree be definable with local rules which do not require any global information. Otherwise, a one-time (i.e., not continuously every moment during the mission and only when a change of topology is needed) propagation of the structure of the target tree is needed which can be done decentralized. This point is explained more for the case of line and star topologies which are locally definable in Section~\ref{sec:Exp} and supplementary materials~\cite{suppmaterial}.
\end{remark}



To implement the topology manipulation operations, it is needed to establish new connections with second neighbor robots. During the operations, Due to the communication range of robots, as considered to be $R$,  the neighbor's distance must be reduced until it reaches $\frac{R}{2}-\delta$, $\delta$ being a positive small constant. After this distance reduction, robots will be able to communicate and establish new connections with their second neighbors. A method to perform such a local inter-distance reduction using gradients of potential functions is investigated in \cite{khateri2020connectivity} for a local swap operation. We have used the same controller and the proof of connectivity preservation during the inter-distance reduction is similar.

\section{EXPERIMENTS}
\label{sec:Exp}

\RvIcra{In this section, we show how we can use the operations defined in Section~\ref{sec:main} to perform line and star formation from any initially connected random topology. The knowledge of Pr\"ufer sequences and labeling of nodes are not used for forming lines and star formation. We choose the line and star formation because the sequences required for a line/star topology from any initial topology can be determined by local information such as the number of neighbors of each node and the roles of the neighbor robots (free to participate in an operation or busy). Unlike the Theorem ~\ref{th:reducibility} we use parallel operations to make faster lines and stars. In particular, to form lines we use leaf transfers and super leaf transfers, and to form stars we use leafizations and super leafization. For more information about the rules refer to the supplementary material~\cite{suppmaterial}. For other symmetric topologies that have a local structure, similar algorithms could be developed because the possibility of the existence of such an algorithm is proven in Lemma~\ref{lem:transformation of all trees}.}

we have designed these experiments in the Buzz~\cite{pinciroli2016buzz} programming language with a set of Khepera IV robots using ARGoS3~\cite{Pinciroli2012} as our physics-based simulator. Our experiments show that we can transform any initial topology into a line or a star, with swarms of varying sizes (i.e., number of robots), namely 15, 30, and 60. The robots have a communication range $R_{\text{range}}$. We run each experiment for 5 randomly generated initial positions. 

\RvIcra{We require the robots to be initially connected and to be aware of their immediate and second neighbors on an initial spanning tree.  Please note that this initial spanning tree could be given to all robots before they start their mission or it can be calculated in a decentralized way~\cite{majcherczyk2018decentralized}.}

After this step, the robots
perform local operations on the spanning tree to form
line or star topologies. When the robots have to perform the
transformation, the robots involved come to a reduced distance 
$R_{\text{transfer}}$. This gives the robots the ability to form connections with multiple neighbors and break links if needed. Robots maintain a distance $R_{\text{mission}}$ while not performing the operations. 


\RvIcra{The robots make use of range and bearing measurements to know the relative positions of their neighbors and use potential-based controllers similar to~\cite{wen2012connectivity}. This allows the robots to maintain the required distances $R_{\text{transfer}}$ and $R_{\text{mission}}$.}

Figure~\ref{fig:time_taken} shows the time taken to transform the topology for
line and star cases, labeled as topology transformed, and the time required to
arrange the robots, labeled as arrangement finished. This latter is the physical
positioning of the robots, e.g. the straightening of the line formation and the
equalization of angles between the points of the star~\ref{fig:introduction_figure}.

\RvIcra{ Figures~\ref{fig:line_formation} and \ref{fig:star_formation} depict $\lambda_2$ and coverage area. We plot the time evolution of $\lambda_2$ of the tree, as a connectivity index and a parameter specifying the consensus rate. This is done to demonstrate the connectivity awareness of our method which has to stay greater than zero over the experiment. 

In line formation, the value of $\lambda_2$ of the tree reduces with time and reaches a constant value when it is straightened out. Whereas in star formation, the value of $\lambda_2$ of the tree increases and reaches the same value no matter the number of nodes in the system. The coverage area has been shown to decrease for the star and to
increase for the line case. This shows the trade-off between $\lambda_2$ 
and the coverage area. This is why topology manipulation is required to provide flexibility.

We also provide the plots for $\lambda_2$ of the graph and some progress indices for line in the form of evolution of $\text{degree(robot)}=1$ and $\text{degree(robot)}=2$ with time and for star in the form of evolution of $\text{degree(robot)}=1$ and $\text{degree(robot)}\geq2$ with time in the supplementary material~\cite{suppmaterial}. This is to show that the manipulation operations are changing the initial topology closer to a line and star with time.}

It is worth noting the importance of choosing the two topologies line and star.
For the line, the main advantage is the coverage it can provide. This is useful
when the robots are searching, exploring, or they need communication with a base
station. For star, the main advantages are the ability to make decisions faster
(e.g., consensus).\RvIcra{ We show that from any random topology we can form the tree with maximum and minimum $\lambda_2$. Based on some environmental conditions or user requirements, the swarm can make a decentralized decision to change the topology of the swarm and consequently manipulate $\lambda_2$ (e.g., when the swarm wants to forage it can form a line as they need to maximize their coverage, and when they want to decide where to store food it can form a star as they need to make faster decisions.) }

\begin{figure}[tbp]
        \centering
        \includegraphics[width=0.45\columnwidth]{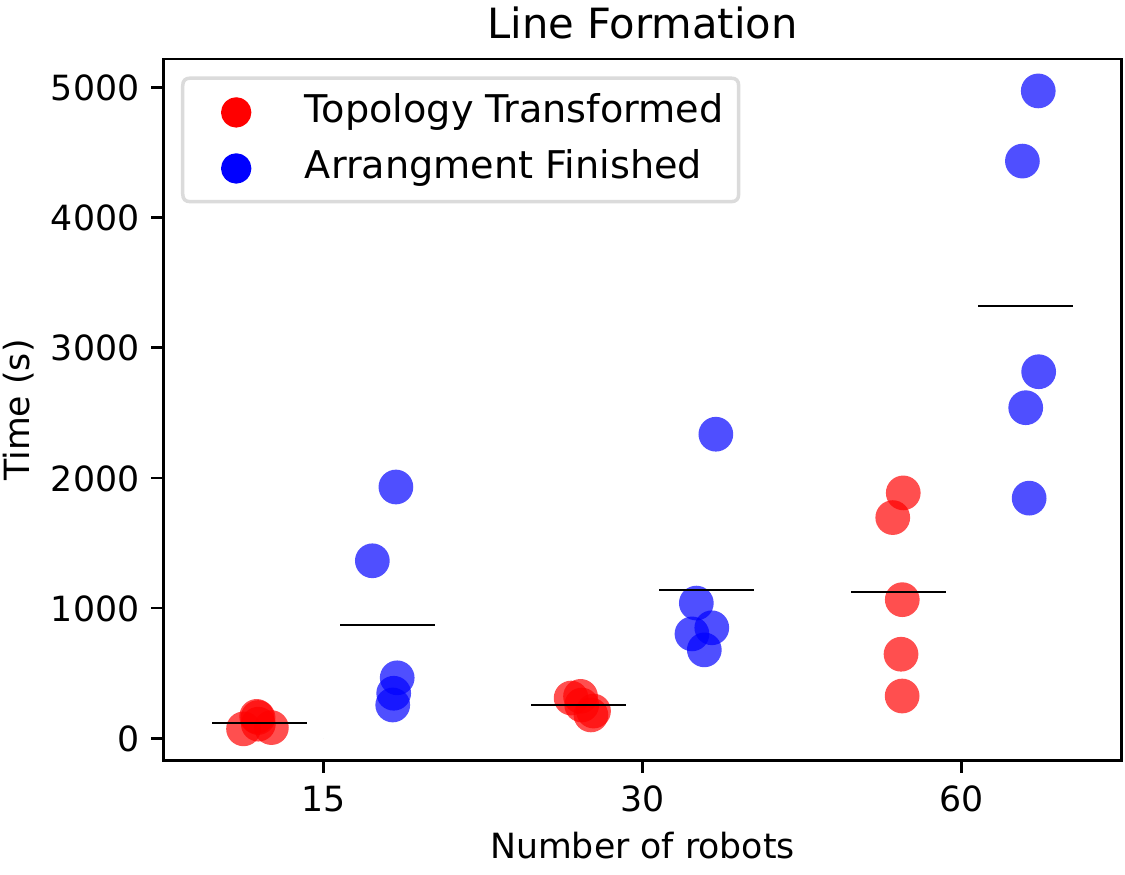}
        \includegraphics[width=0.45\columnwidth]{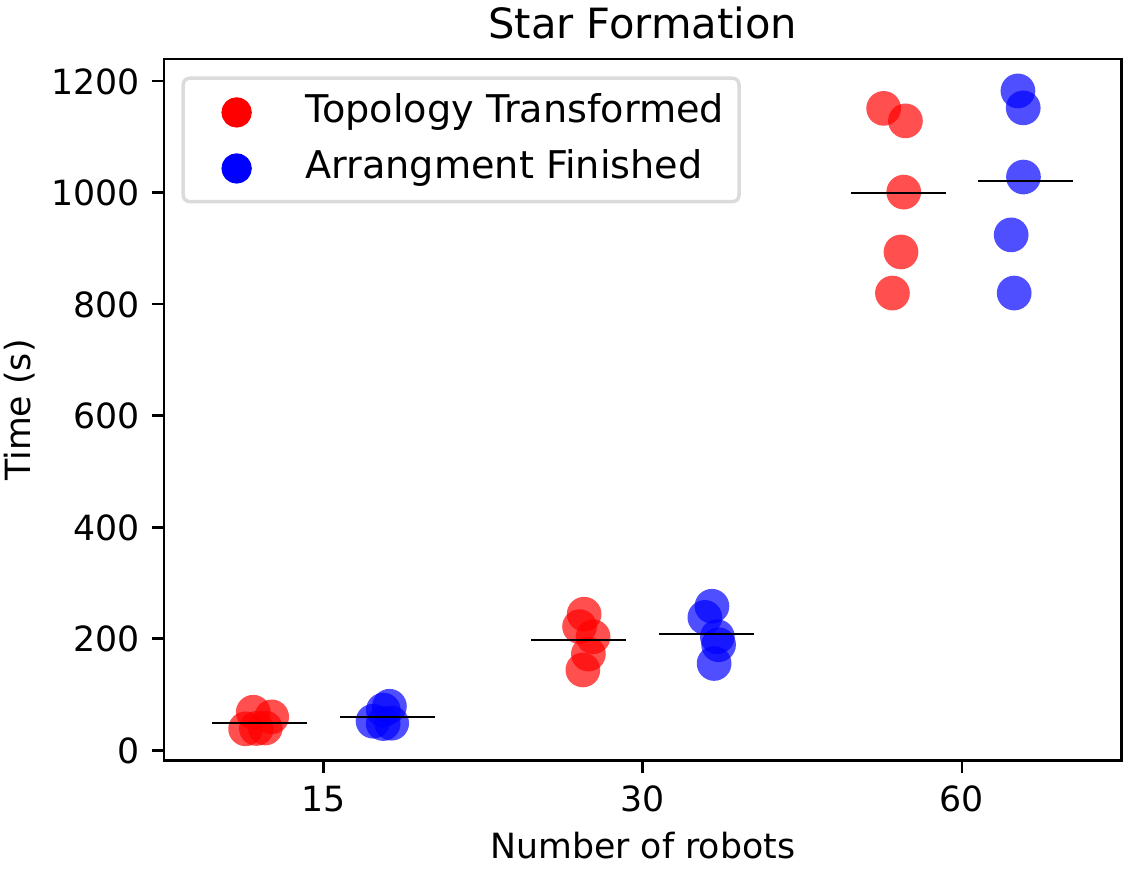}
        \caption{Plots for the time taken to complete the tree transformation and arrangement completion, from randomly initial trees to the line and star formation, for 15, 30, and 60 robots.}
    \label{fig:time_taken}
    \vspace{-2.0em}
\end{figure}%
\begin{figure*}[tbp]
    \centering
    \includegraphics[width=0.325\textwidth]{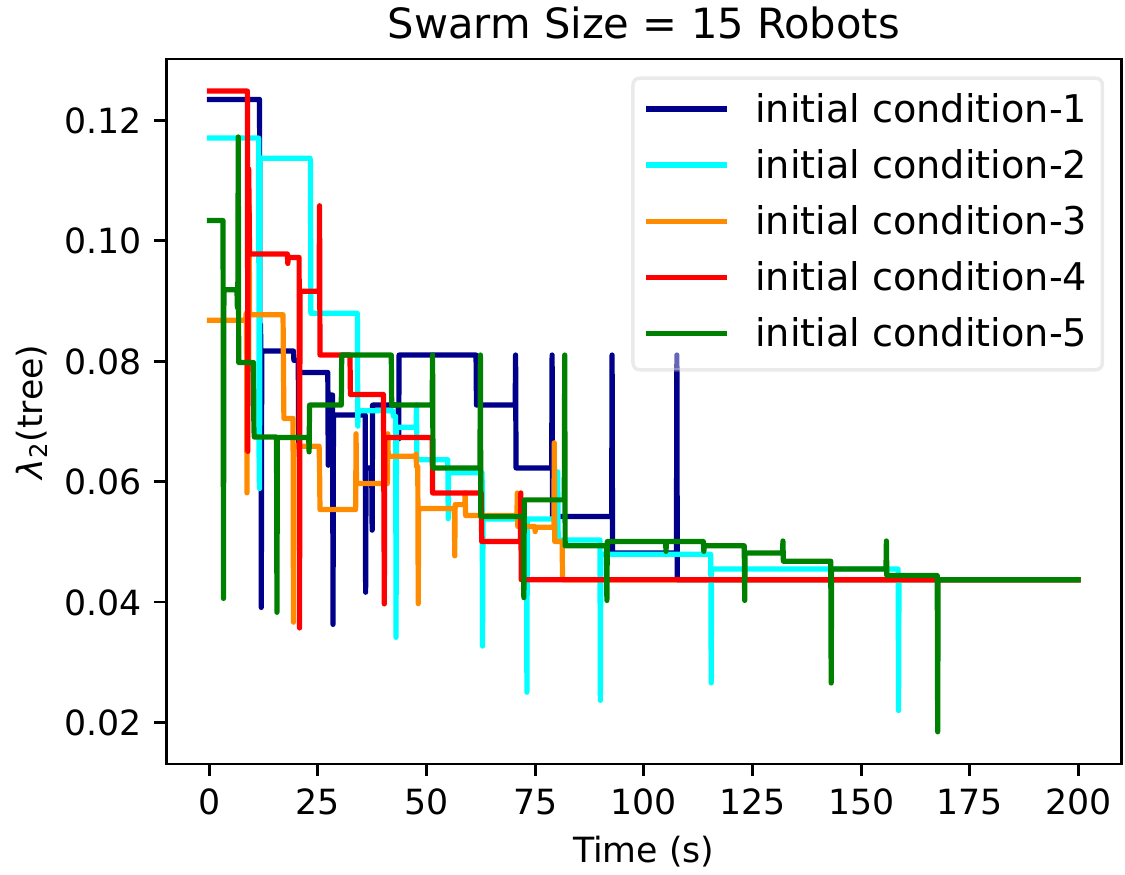}
    \includegraphics[width=0.325\textwidth]{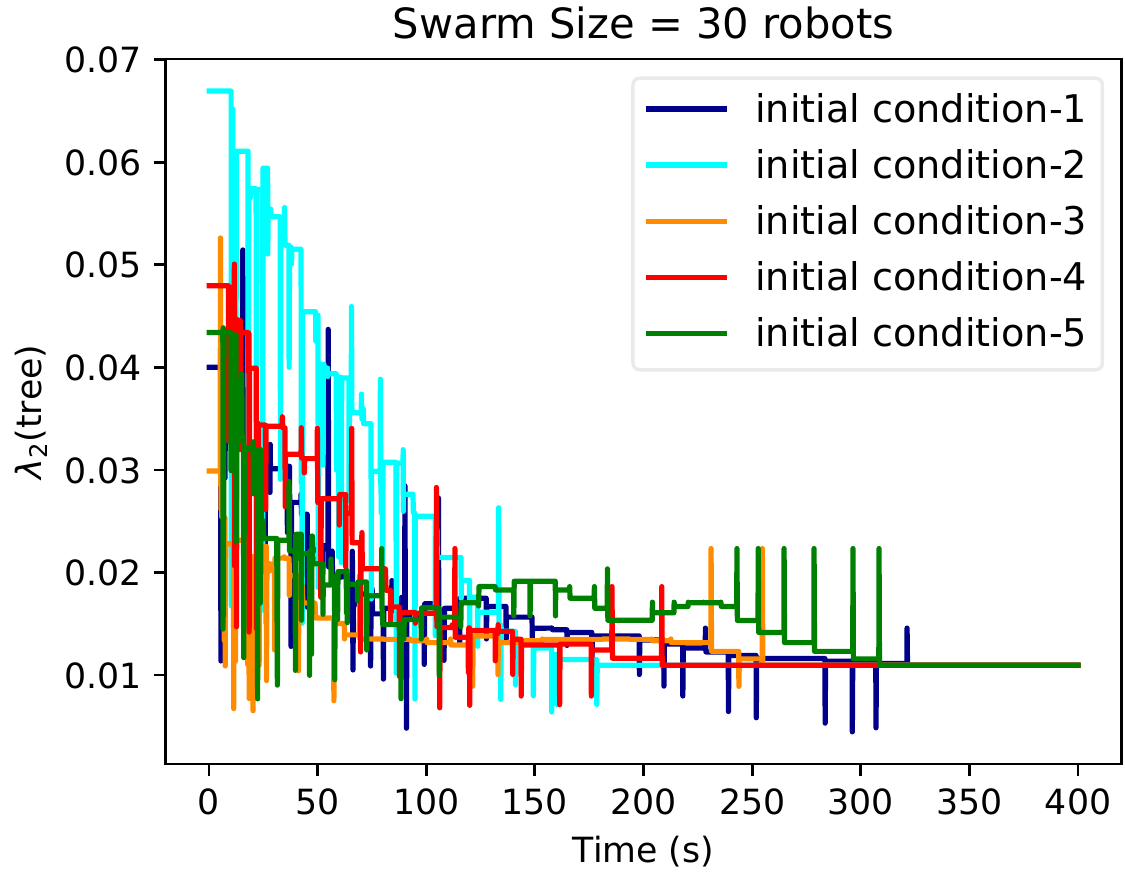}
    \includegraphics[width=0.325\textwidth]{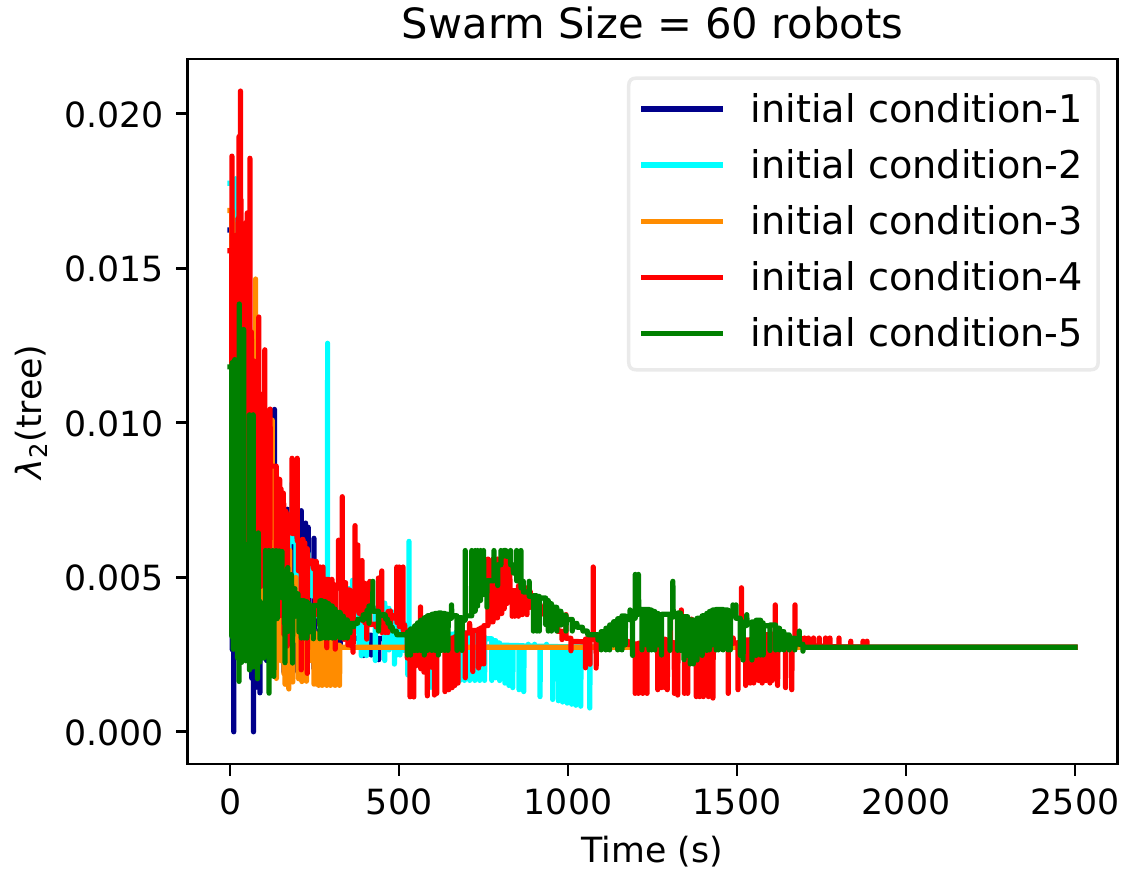}
    \includegraphics[width=0.32\textwidth]{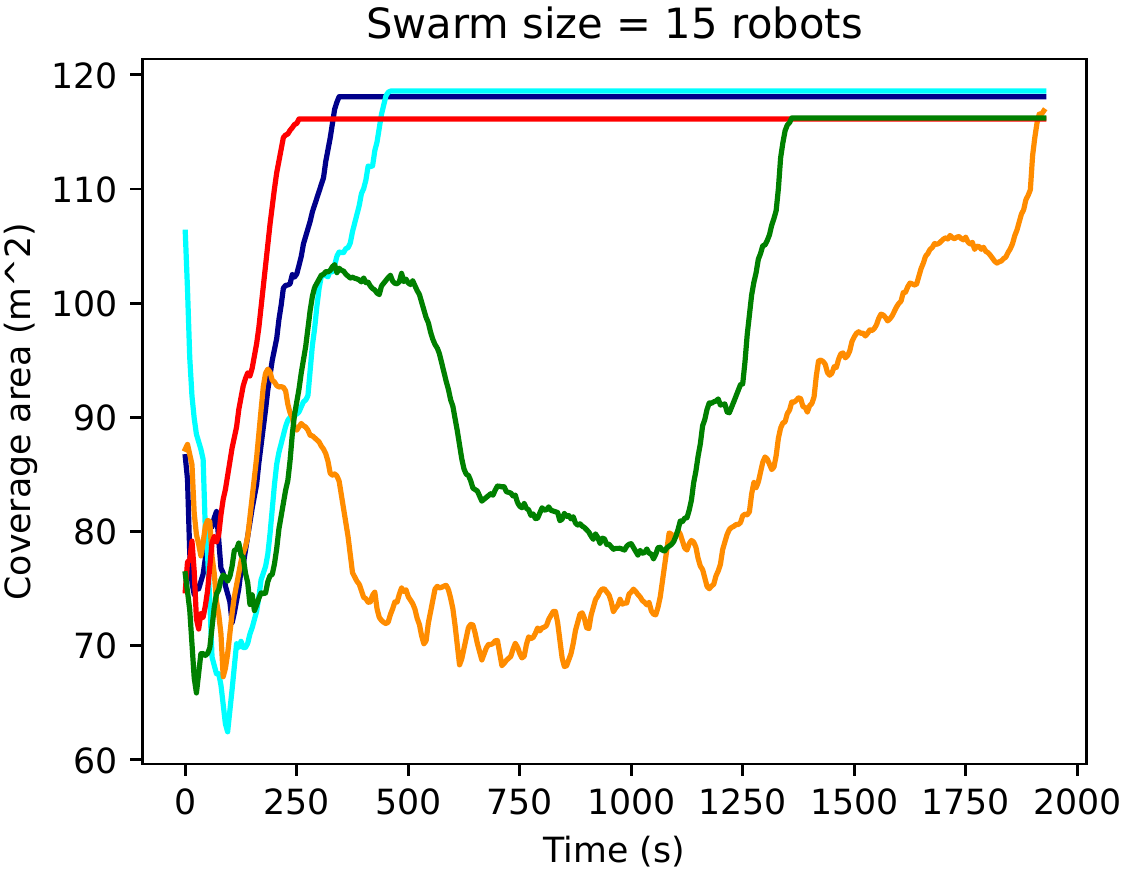}
    \includegraphics[width=0.32\textwidth]{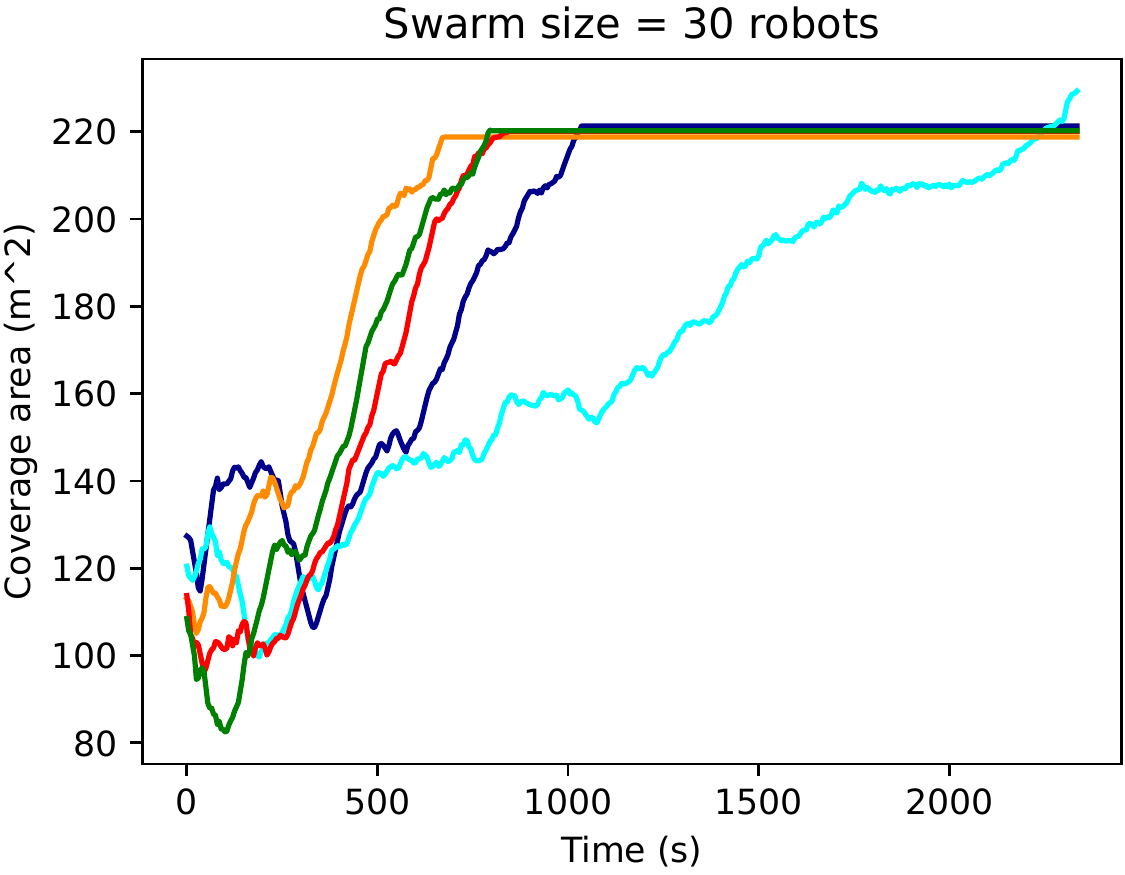}
    \includegraphics[width=0.32\textwidth]{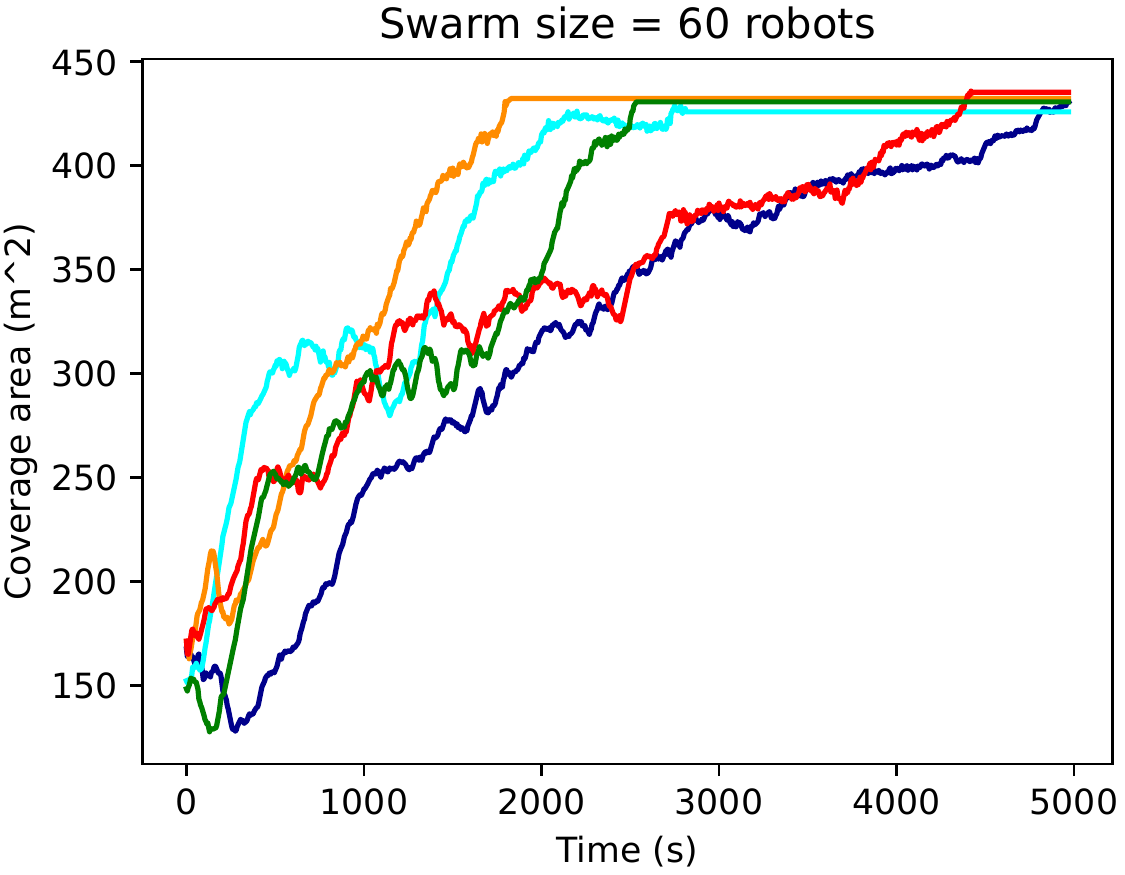}
    \caption{Line formation for 15, 30, and 60 robots column wise. From top to bottom in each column the evolution of $\lambda_2$ of the spanning tree and coverage area over time is plotted.}
\label{fig:line_formation}
\end{figure*}%
\begin{figure*}[h]
    \centering
    \includegraphics[width=0.325\textwidth]{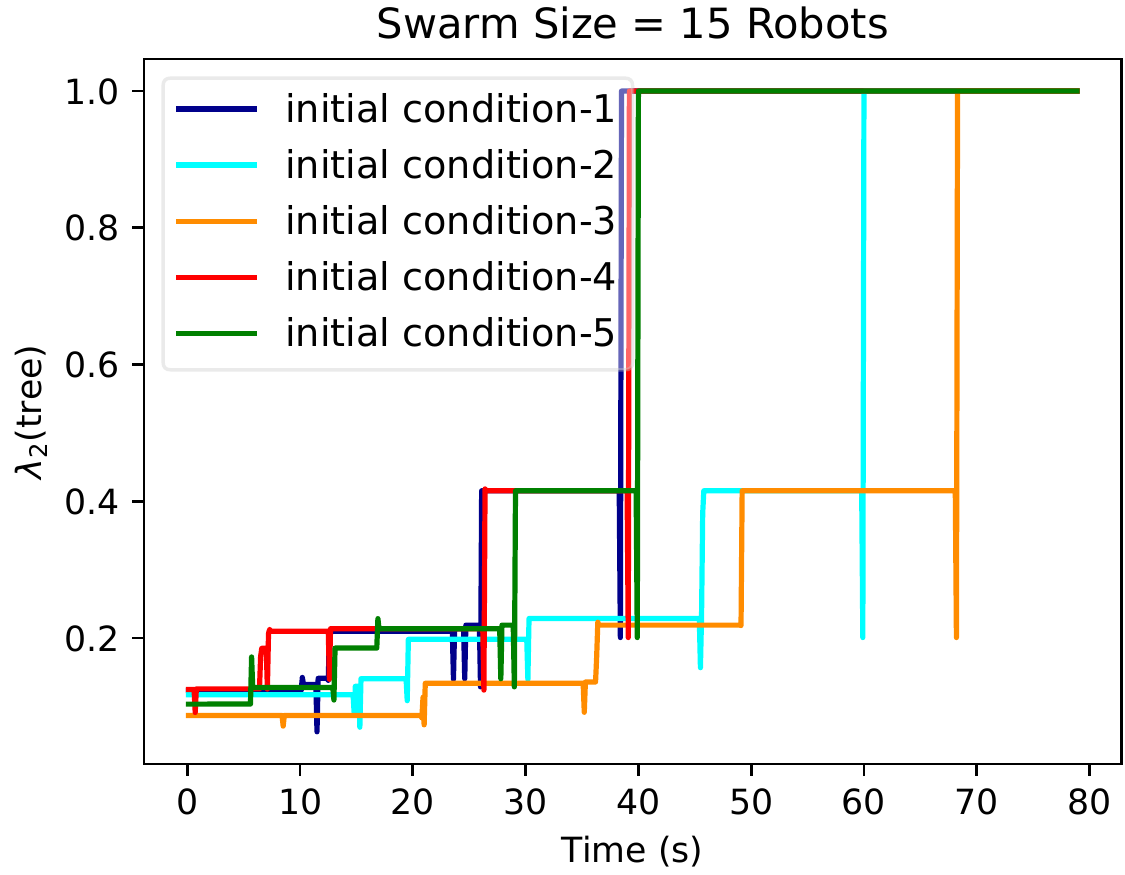}
    \includegraphics[width=0.325\textwidth]{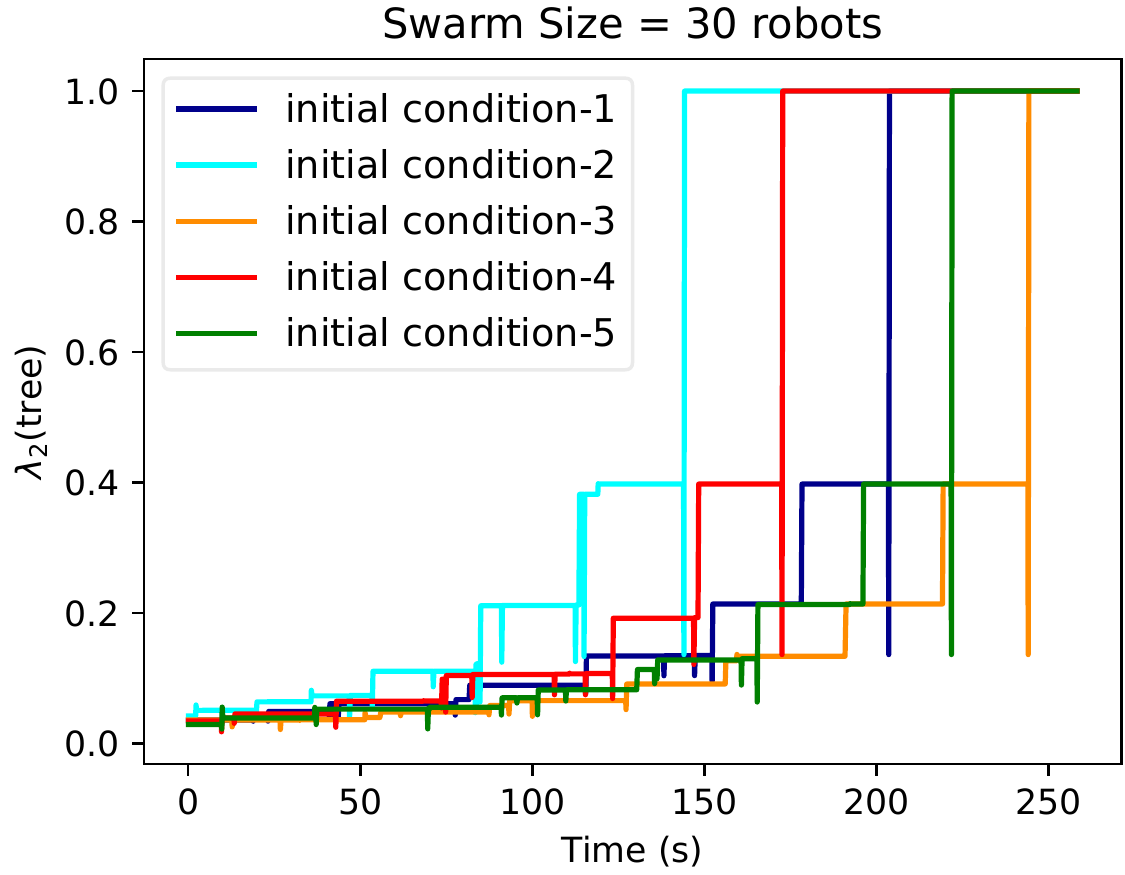}
    \includegraphics[width=0.325\textwidth]{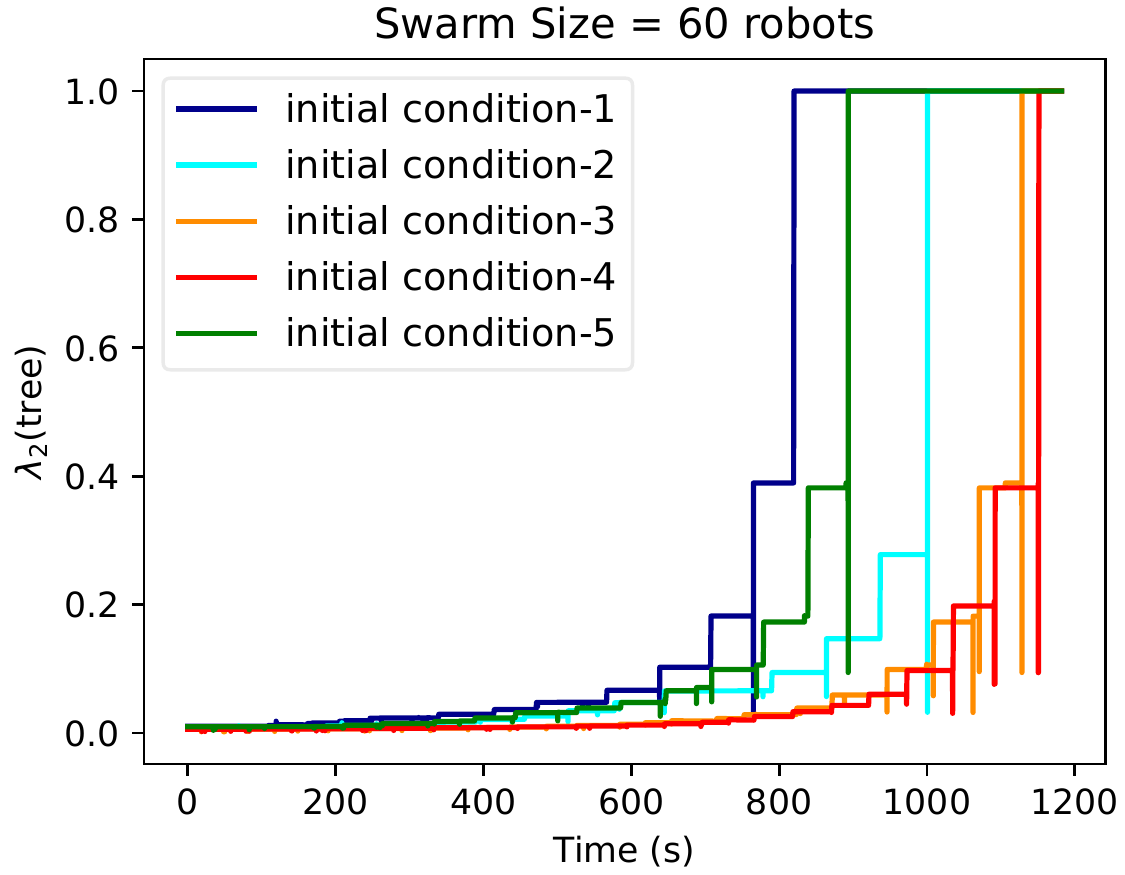}
    \includegraphics[width=0.32\textwidth]{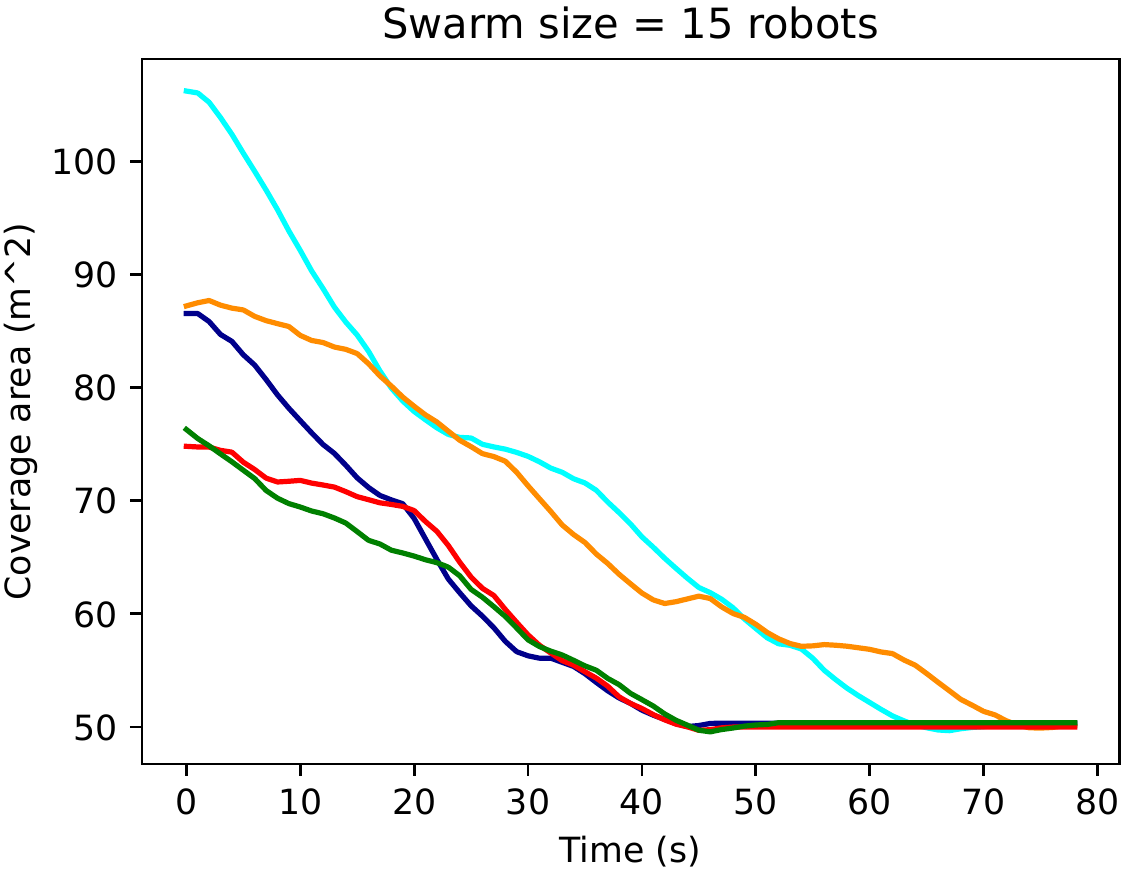}
    \includegraphics[width=0.32\textwidth]{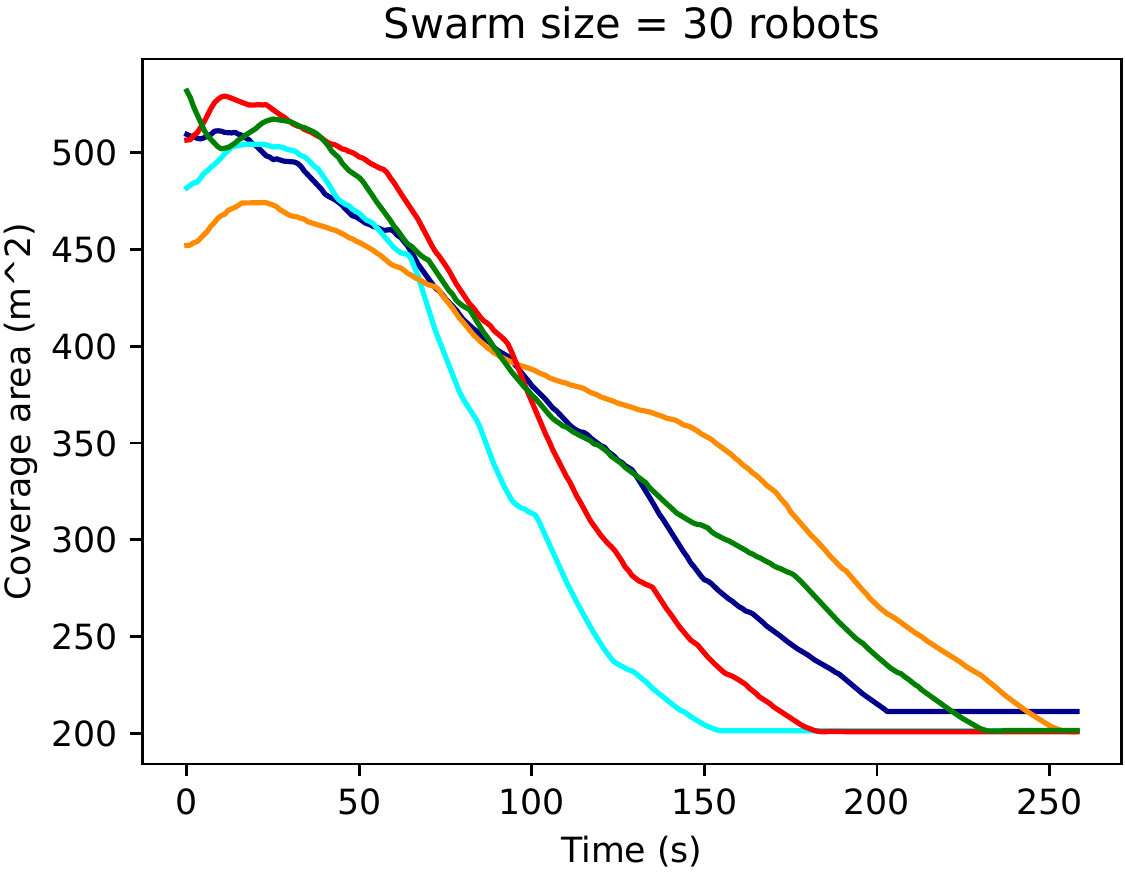}
    \includegraphics[width=0.32\textwidth]{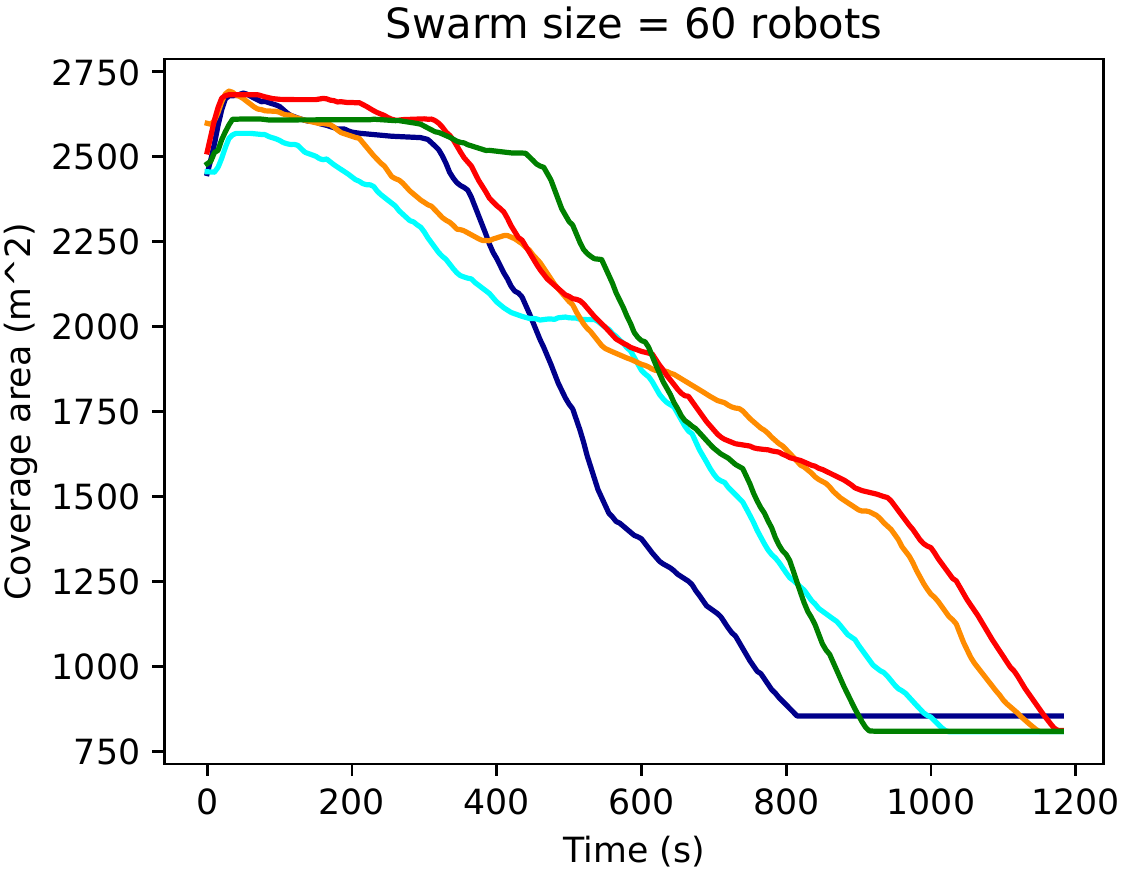}
    \caption{Star formation for 15, 30, and 60 robots column wise. From top to bottom in each column the evolution of $\lambda_2$ of the spanning tree and coverage area over time are plotted.}
\label{fig:star_formation}
\vspace{-2em}
\end{figure*}%

\section{CONCLUSION}
The possibility of introducing a complete set of connectivity-aware local topology manipulator operations \RvTypoIcra{have} been proven to be achievable in this paper. This result has shown that estimating a global index of connectivity is not necessary for topology manipulation in its most flexible sense. This is important because estimating global connectivity indices in a decentralized fashion is sluggish and not scalable. As an application for this local method, transforming a swarm in such a way that achieves extreme contradicting properties (i.e., maximum consensus rate and coverage area) was shown to be possible. Future research in this area can consist of finding another complete set of operations with better performance on certain applications and applying this local method to other swarm missions.

\bibliographystyle{IEEEtran}
\bibliography{ref}

\end{document}